\documentclass{article} 
\usepackage{iclr2027_conference,times}

\usepackage{amsmath,amsfonts,bm}

\def\eqref#1{equation~\ref{#1}}

\def\1{\bm{1}}

\DeclareMathAlphabet{\mathsfit}{\encodingdefault}{\sfdefault}{m}{sl}
\SetMathAlphabet{\mathsfit}{bold}{\encodingdefault}{\sfdefault}{bx}{n}

\usepackage{wrapfig}
\usepackage{hyperref}
\usepackage{url}
\usepackage[utf8]{inputenc} 
\usepackage[T1]{fontenc}    
\usepackage{hyperref}       
\usepackage{url}            
\usepackage{booktabs}       
\usepackage{amsfonts}       
\usepackage{nicefrac}       
\usepackage{microtype}      
\usepackage{xcolor}         
\usepackage{xpatch}
\usepackage{multirow}
\usepackage{multicol}
\usepackage{graphicx}  %
\usepackage{subcaption}
\usepackage{amsmath} 
\usepackage{amssymb}
\usepackage{wrapfig}
\usepackage[table]{xcolor} 

\usepackage{subcaption}  

\newtheorem{proposition}{Proposition}
\newtheorem{Theorem}{Theorem}
\newtheorem{proof}{proof}
\newtheorem{Remark}{Remark}

\title{TWIN-MoE: Two-stage Intra-Expert Imputation Network for Incomplete Multimodal Learning}

\iclrfinalcopy
\author{Liang Nathan Zheng, Jiyuan Liu, Wenhao Liang, Wei Emma Zhang, Weitong Chen\thanks{Corresponding Author. The paper is under review}, \& Lin Yue  \\
School of Computer Science and Information Technology \\
Adelaide University\\
Adelaide, SA 5000, Australia \\
\texttt{\{liangwei.zheng,jiyuan.liu,wenhao.liang,wei.e.zhang,weitong.chen,lin.yue\}} \\
\texttt{@adelaide.edu.au} \\
}

\begin{document}

\maketitle

\begin{abstract}
Multimodal learning is often deployed under the assumption that all modalities are fully observed, yet real-world data frequently violates this assumption, leaving some modalities missing. Recent Mixture-of-Experts (MoE) approaches address missing modalities either by routing each missingness pattern to a dedicated expert, which scales poorly as the number of modalities grows, or by imputing the missing modality from representations contributed by multiple experts, which requires a costly retrieval mechanism. We argue that such unconstrained inter-expert imputation can be sub-optimal, as it aggregates features from distinct routing-induced semantic regions and may introduce information unrelated to the target modality. Building on a geometric interpretation of sparse MoE routing through expert-specific Voronoi regions, we show that routing induces expert-conditioned representation neighborhoods and derive an imputation-error bound controlled by their local feature dispersion. When these expert-conditioned neighborhoods are more compact than the global representation space, intra-expert aggregation admits a tighter worst-case reconstruction bound than unconstrained global aggregation. Guided by this analysis, we propose TWIN, a two-stage intra-expert imputation framework. Pre-imputation generates a context-conditioned estimate in the encoder embedding space and radially soft-clips it using modality-specific statistics maintained in an observed FIFO queue, this ensures that the pre-imputed modality stays in empirically calibrated region of the target modality space. Post-imputation refines the missing representation after MoE by aggregating expert-conditioned observed representations within the same sample. Across four missing-modality benchmarks under natural, random, and asymmetric dropout protocols, TWIN achieves state-of-the-art performance while using substantially fewer parameters than competing MoE-based imputation methods.
\end{abstract}

\section{Introduction}
Multimodal learning integrates complementary information from multiple data sources and has demonstrated strong performance across a wide range of downstream tasks, including computer vision \cite{wu2017visual,yan2025automotive,pmlrzhu24a,wang2024improving}, temporal reasoning \cite{zheng2024irregularity,zheng2026lifting,wu2026vision,ge2026vasevqa}, and sentiment analysis \cite{han2024fusemoe,zheng2025rethinking}. However, most existing multimodal learning methods assume that all modalities are fully observed \cite{rahman2020integrating,dai2026papervoyager,zhang2025advgan}, an assumption that is frequently violated in real-world applications due to heterogeneous data availability. For example, a patient may have clinical notes but lack corresponding laboratory measurements because not all examinations are performed for every patient. Such missing modalities result in incomplete signals and can substantially degrade downstream prediction performance, motivating robust multimodal models that can operate under arbitrary modality availability. Recent studies have explored Mixture-of-Experts (MoE) architectures \cite{han2024fusemoe,yun2024flex,zheng2026tackling} to address this problem by routing different missingness patterns to specialized experts. However, the number of possible modality combinations grows rapidly with the number of modalities, leading to poor scalability. Another line of MoE-based methods \cite{zheng2025rethinking} adopts two-stage imputation, where multiple experts provide complementary representations to reconstruct a missing modality. Such approaches, however, require additional retrieval mechanisms to determine which expert representations should contribute to the reconstruction, introducing considerable parameter and computational overhead. More importantly, inter-expert imputation may introduce additional geometric uncertainty because reference representations can originate from distinct routing regions. To motivate this observation, we visualize the routing behavior of an MoE layer in Figure \ref{fig:t-sne_plot_sub_spaces}, where tokens assigned to different experts form localized regions in the embedding space.

\begin{wrapfigure}{r}{0.32\textwidth}
    \centering
    \includegraphics[width=\linewidth]{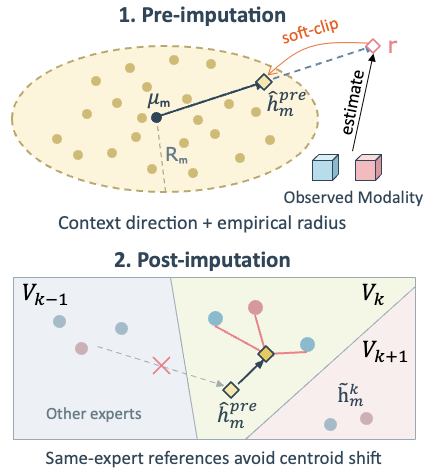}
    \caption{Illustration of the intuition behind TWIN-MoE.}
    \label{fig:TWINMoE_intuition}
\end{wrapfigure}

Motivated by this routing geometry, we analyze missing-modality imputation from the perspective of Voronoi tessellation. We show that sparse MoE routing induces expert-specific regions in the representation space and, under locally compact expert regions, restricting imputation references to the same routing region admits a tighter worst-case reconstruction-error bound than unconstrained inter-expert aggregation. We further characterize the geometric uncertainty introduced by cross-expert references through the additional centroid-shift term in Eq. \ref{eq:shift_error}. Guided by this analysis, we propose \textbf{TWIN}, a two-stage intra-expert imputation framework consisting of distribution-aware pre-imputation and locality-aware post-imputation. As illustrated in Figure \ref{fig:TWINMoE_intuition}, the pre-imputation stage predicts a context-dependent representation for the missing modality from the observed modalities and applies radial soft-clipping using modality-specific empirical statistics to prevent the estimate from drifting excessively from the observed distribution. After MoE routing, the post-imputation stage further refines the missing representation using only observed tokens routed to the same expert, thereby avoiding unconstrained information transfer across geometrically distinct routing regions. Our theoretical analysis focuses on sparse MoE routing, while the same locality principle can also be instantiated with Soft MoE \cite{puigcerver2024sparse}, which we evaluate empirically.

Compared with existing two-stage imputation methods that initialize missing representations through sampled or globally aggregated features, our pre-imputation explicitly uses the observed modalities to determine a context-dependent imputation direction while constraining its magnitude according to the empirical distribution of the target modality. Meanwhile, intra-expert post-imputation restricts reference representations to the local routing region of the missing token, providing a simple alternative to parameter-heavy inter-expert retrieval mechanisms. Our main contributions are summarized as follows:

\begin{itemize}
\item We propose TWIN, a two-stage pre- and post-imputation framework for incomplete multimodal learning that can be integrated with both sparse and soft MoE backbones without substantial architectural modification.
\item We provide a geometric analysis of sparse MoE routing and show that, when expert-specific routing regions are more compact than the global representation space, intra-expert aggregation admits a tighter worst-case reconstruction-error upper bound than unconstrained inter-expert aggregation.
\item We evaluate TWIN on four missing-modality benchmarks under natural missingness, random modality dropout, and asymmetric modality dropout, where it consistently achieves competitive or state-of-the-art performance against existing missing-modality and MoE-based methods.
\end{itemize}

\section{Related Work}
\subsection{Multimodal Fusion and Incomplete Multimodal Learning}
Multimodal learning integrates heterogeneous information from multiple sources to improve downstream prediction. Representative approaches include kernel-based fusion~\cite{bucak2013multiple}, early/late fusion~\cite{xu2023multimodal,baltruvsaitis2018multimodal,guo2019deep}, Tensor Fusion Network~\cite{zadeh2017tensor}, Multimodal Transformer~\cite{tsai2019multimodal}, Multimodal Adaptation Gate~\cite{rahman2020integrating}, and LlMoE~\cite{mustafa2022multimodal}. Similar strategies have also been applied to healthcare, where methods such as MedFuse~\cite{hayat2022medfuse}, MISTS~\cite{zhang2023improving}, and attention-gated fusion~\cite{yang2021leverage} combine clinical modalities including EHR, irregular time series, and chest X-rays. However, many multimodal methods rely on complete modality observations. To handle incomplete inputs, SMIL~\cite{ma2021smil} reconstructs latent representations through Bayesian meta-learning, while generative approaches~\cite{du2018semi,shang2017vigan} synthesize missing views. ShaSpec~\cite{wang2023multi} and DrFuse~\cite{yao2024drfuse} further separate shared and modality-specific representations to improve robustness. Although effective, these methods often introduce additional reconstruction modules or auxiliary objectives, increasing model and optimization complexity~\cite{sun2024similar}.

\subsection{Mixture-of-Experts for Missing-Modality Learning}
Mixture-of-Experts (MoE) provides a flexible way to handle incomplete multimodal inputs through expert specialization and dynamic routing. FuseMoE~\cite{han2024fusemoe} models different modality combinations with specialized experts, while Flex-MoE~\cite{yun2024flex} introduces a modality bank to support arbitrary modality availability. ConfSMoE~\cite{zheng2025rethinking} further combines confidence-aware routing with two-stage imputation using intra- and inter-modal information. More recently, MCMoE~\cite{xu2026mcmoe} employs an adaptive modality generator together with modality-specific experts to reconstruct and refine missing representations, while MARS~\cite{baek2026mars} specializes experts according to residual discrepancies between complete and incomplete representations. CP-IMoE~\cite{li2026cpimoe} explores a collaborative prompt-guided interactive MoE framework for incomplete multimodal learning.  Despite these advances, existing approaches mainly use MoE for missing-pattern specialization, modality completion, or adaptive expert interaction. In contrast, TWIN exploits the locality already induced by MoE routing and performs distribution-aware pre-imputation followed by intra-expert refinement, avoiding unrestricted cross-expert aggregation while introducing only minor modifications to the underlying MoE backbone.

\section{From Mixture-of-Experts to Voronoi Tessellation}
Specifically, given an input data sequence $\mathbf{x} \in \mathbb{R}^{s \times d}$, a Sparse Mixture-of-Experts (Sparse MoE) layer with $N$ experts employs a gating network (often referred to as a router) to dynamically assign tokens to a specialized subset of experts. Unlike expert-choice soft routing, Sparse MoE operates under a \textit{token-choice} paradigm. Formally, the router is parameterized by a weight matrix $\mathbf{W} \in \mathbb{R}^{d \times N}$, which computes the routing probability of the $i$-th token being assigned to the $j$-th expert:

\begin{equation}
    P_{i, j} = \frac{\exp(\mathbf{x}_i \cdot \mathbf{W}_{j} + b_j)}{\sum_{j'=1}^{N} \exp(\mathbf{x}_i \cdot \mathbf{W}_{j'} + b_{j'})}
    \label{eq:routing_prob}
\end{equation}

where $i \in \{1, \dots, s\}$ is the token index, $j \in \{1, \dots, N\}$ is the expert index, and $\mathbf{W}_{j}$ denotes the $j$-th column of the router weights. To induce sparsity and restrict the computational cost, Sparse MoE employs a Top-$K$ hard routing mechanism. Instead of fully activating all experts, each token is strictly dispatched to its highest-scoring $K$ experts (typically $K \ll N$). This discrete assignment can be formalized by a binary routing mask $\mathbf{M} \in \{0, 1\}^{s \times N}$:

\begin{equation}
    \mathbf{z}_i = \sum_{j=1}^N M_{i, j} \cdot P_{i, j} \cdot f_j(\mathbf{x}_i), \quad    
    M_{i, j} = 
    \begin{cases} 
        1, & \text{if } j \in \text{TopK}(P_i) \\ 
        0, & \text{otherwise} 
    \end{cases}
    \label{eq:sparse_combine}
\end{equation}

where $\text{TopK}(P_i)$ retrieves the indices of the $K$ largest probabilities in the distribution $P_i$ and $f_j(\cdot)$ represents the Feed-Forward Network (FFN) block associated with the $j$-th expert. Through this hard masking, the continuous feature space is geometrically partitioned into discrete sub-regions (e.g., Voronoi cells), strictly bounding the receptive field of each expert. Next, each expert $f_j$ processes its assigned token inputs independently. Finally, to reconstruct the output sequence, the token-level output $\mathbf{z}_i \in \mathbb{R}^d$ is formulated as a probabilistically weighted sum of the representations processed by its explicitly selected experts. Next, we build the connection of the Voronoi cell with MoE set-up.

Let $\mathcal{H} \subseteq \mathbb{R}^d$ denote the continuous feature space of the observed modality. In Sparse MoE, the routing mechanism is governed by a gating weight matrix $\mathbf{W} \in \mathbb{R}^{d \times N}$. We conceptualize the column vectors of this matrix as a set of $N$ semantic centroids, denoted as $\mathcal{C} = \{c_1, c_2, \dots, c_N\} \subset \mathcal{H}$, where $c_k = \mathbf{W}_{k}$.

\begin{proposition}[Geometric Partitioning of Sparse Routing via Voronoi Tessellation]
\label{prop:voronoi_partition}

Let $\mathcal{H}\subset\mathbb{R}^{d}$ denote the bounded continuous feature space, and let $\mathcal{C}=\{c_1,c_2,\ldots,c_N\}$ denote the routing centroids associated with $N$ experts. The hard routing function $f_{\mathcal{V}}:\mathcal{H}\rightarrow\{1,\ldots,N\}$ routes a token $h\in\mathcal{H}$ to expert $k$ if it maximizes the gating logit, i.e., $h\cdot c_k+b_k\geq h\cdot c_j+b_j$, $\forall j\neq k$. As formally derived in Appendix~\ref{app:voronoi_proof}, by imposing the routing bias $b_k=-\frac{1}{2}\|c_k\|_2^2$, the linear logit maximization becomes strictly equivalent to nearest-centroid assignment under the squared $L_2$ distance. In experiment, we dynamically calculate the $b_k$ in router to make Voronoi router. Consequently, the continuous feature space is geometrically partitioned into expert-specific regions, where the receptive field of the $k$-th expert corresponds to the $k$-th Voronoi cell:
\begin{equation}
    \mathcal{V}_k
    =
    \left\{
    h\in\mathcal{H}
    \mid
    \|h-c_k\|_2
    \leq
    \|h-c_j\|_2,
    \forall j\neq k
    \right\}.
\end{equation}

\end{proposition}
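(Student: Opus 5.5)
The plan is to show directly that, with the bias choice $b_k=-\tfrac12\|c_k\|_2^2$, maximizing the gating logit over $k$ is the same as minimizing $\|h-c_k\|_2^2$ over $k$. The cell structure then follows by reading off the argmin condition.

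\textbf{Step 1: rewrite each logit.} Fix a token $h\in\mathcal{H}$. For every expert $k$, expand the squared distance:
\begin{equation*}
\|h-c_k\|_2^2=\|h\|_2^2-2\,h\cdot c_k+\|c_k\|_2^2 .
\end{equation*}
Substituting the imposed bias gives
\begin{equation*}
h\cdot c_k+b_k=h\cdot c_k-\tfrac12\|c_k\|_2^2=\tfrac12\|h\|_2^2-\tfrac12\|h-c_k\|_2^2 .
\end{equation*}
So each logit is a common, $k$-independent term $\tfrac12\|h\|_2^2$ minus half the squared distance to $c_k$.

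\textbf{Step 2: equivalence of the routing conditions.} For any $j\neq k$, subtract the two expressions from Step 1. The term $\tfrac12\|h\|_2^2$ cancels, so
\begin{equation*}
h\cdot c_k+b_k\ge h\cdot c_j+b_j \iff \|h-c_k\|_2^2\le\|h-c_j\|_2^2 .
\end{equation*}
Since distances are nonnegative and the square root is monotone, this is in turn equivalent to $\|h-c_k\|_2\le\|h-c_j\|_2$. Quantifying over all $j\neq k$ shows that the set of tokens routed to expert $k$ is exactly $\mathcal{V}_k$. Because the softmax in Eq.~\ref{eq:routing_prob} is strictly monotone in the logits, the same ordering holds for $P_{i,j}$. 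Hence the Top-$1$ choice in Eq.~\ref{eq:sparse_combine} coincides with nearest-centroid assignment.

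\textbf{Step 3: the partition claim.} Each $\mathcal{V}_k$ is an intersection of closed half-spaces
\begin{equation*}
\{h:\ 2h\cdot(c_j-c_k)\le\|c_j\|_2^2-\|c_k\|_2^2\}
\end{equation*}
with $\mathcal{H}$, so it is a convex polyhedral cell. The cells cover $\mathcal{H}$ because a minimum over the finite set of centroids always exists. Two cells can overlap only on bisector hyperplanes, which are Lebesgue-null when the centroids are distinct.

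\textbf{Main obstacle.} The algebra is routine. The one point needing care is the tie set, where the argmax, and hence $f_{\mathcal{V}}$, is not unique. I would handle it by fixing a deterministic tie-break, such as the smallest index. This makes $f_{\mathcal{V}}$ well-defined and its preimages a genuine partition that agrees with the closed cells $\mathcal{V}_k$ up to measure-zero boundaries.

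I would also remark that for Top-$K$ with $K>1$ the same identity shows that the selected experts are the $K$ nearest centroids. This yields a higher-order Voronoi partition, with the $k$-th cell as its first-order case.
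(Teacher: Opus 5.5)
Your proposal is correct and takes essentially the same route as the paper's proof: expand $\|h-c_k\|_2^2$, cancel the $k$-independent term $\|h\|_2^2$, and observe that with $b_k=-\tfrac12\|c_k\|_2^2$ maximizing the logit is the same as minimizing the squared distance, which yields $\mathcal{V}_k$ as the routing region on $\mathcal{H}$. Your additions (softmax monotonicity, the tie-break, null boundaries for distinct centroids, the order-$K$ remark) are sound and go somewhat beyond the paper, with one small correction: $\mathcal{V}_k$ is $\mathcal{H}$ intersected with a convex polyhedron, so it is itself convex polyhedral only when $\mathcal{H}$ is.
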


Furthermore, let $D_{\mathrm{global}}=\sup_{h_a,h_b\in\mathcal{H}}\|h_a-h_b\|_2$ denote the maximum geometric diameter of the global feature space. Correspondingly, the local diameter of the $k$-th Voronoi cell is defined as $D_k=\sup_{h_a,h_b\in\mathcal{V}_k}\|h_a-h_b\|_2$. Since $\mathcal{V}_k\subseteq\mathcal{H}$, the spatial variation of representations assigned to an individual expert is geometrically confined within its local cell, yielding $D_k\leq D_{\mathrm{global}}$. When the learned expert-specific region is strictly more compact than the global feature space, i.e., $D_k<D_{\mathrm{global}}$, this localized geometry provides the basis for the tighter reconstruction-error bound established in Theorem~\ref{thm:local_imputation}.

\section{Proposed Method}

\subsection{Theoretical Contribution}
\begin{Theorem}[Bounded Error of Intra-Expert Imputation]
\label{thm:local_imputation}

For a target representation and its reference representations routed to the
same Voronoi cell $V_k$, the reconstruction error of intra-expert imputation
is upper-bounded by
\begin{equation}
\boxed{
E_{\mathrm{local}}
=
\|\hat{h}_m-h_t^{*}\|_2
\leq
\epsilon_{\mathrm{align}}
+
L D_k ,
}
\label{eq:local_error_bound}
\end{equation}
where $D_k$ denotes the diameter of the corresponding Voronoi cell. In contrast, for unconstrained global imputation over the entire representation space, the worst-case reconstruction error satisfies
\begin{equation}
\boxed{
E_{\mathrm{global}}
\leq
\epsilon_{\mathrm{align}}
+
L_{\rm lip} D_{\mathrm{global}} .
}
\label{eq:global_error_bound}
\end{equation}

\end{Theorem}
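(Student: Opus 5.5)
I will first fix the modelling assumptions the statement leaves implicit, and then prove both bounds with a single triangle-inequality argument.

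\textbf{Setup.} Let $\hat h_m = \sum_{r} w_r\, g(h_r)$ with convex weights $w_r\ge 0$, $\sum_r w_r = 1$, over reference representations $h_r$. Here $g$ is the learned cross-modal alignment map. I assume $g$ is $L$-Lipschitz; on the global space its constant is $L_{\rm lip}$, and I read $L$ as the local constant restricted to $\mathcal{V}_k$. I define the alignment residual as $\epsilon_{\mathrm{align}} = \|g(h_t)-h_t^{*}\|_2$, where $h_t$ is the target token's routed representation and $h_t^{*}$ its ground-truth missing-modality representation. The intra-expert case means $h_t$ and every $h_r$ lie in $\mathcal{V}_k$. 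By Proposition~\ref{prop:voronoi_partition} this is exactly the event that they share the same expert under the biased router.

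\textbf{Step 1 (decomposition).} Insert $g(h_t)$ and apply the triangle inequality:
$\|\hat h_m - h_t^*\|_2 \le \|g(h_t)-h_t^*\|_2 + \|\sum_r w_r (g(h_r)-g(h_t))\|_2$.
The first term is $\epsilon_{\mathrm{align}}$ by definition.

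\textbf{Step 2 (dispersion term).} By convexity of the norm and the Lipschitz property, the second term is at most
$\sum_r w_r L\|h_r-h_t\|_2 \le L \max_r \|h_r - h_t\|_2$.
Since both points lie in $\mathcal{V}_k$, each distance is bounded by $D_k=\sup_{h_a,h_b\in\mathcal{V}_k}\|h_a-h_b\|_2$. This gives \eqref{eq:local_error_bound}. Boundedness of $\mathcal{H}$ ensures $D_k<\infty$.

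\textbf{Step 3 (global case).} Repeat the argument with references drawn anywhere in $\mathcal{H}$. Now $\|h_r-h_t\|_2\le D_{\mathrm{global}}$, and the Lipschitz constant is $L_{\rm lip}$, which yields \eqref{eq:global_error_bound}. For the comparison, note that $L\le L_{\rm lip}$ (a restriction cannot increase the Lipschitz constant) and that $D_k\le D_{\mathrm{global}}$ from the preceding discussion. Together these give that the local bound is never worse, and strictly tighter when $D_k<D_{\mathrm{global}}$.

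\textbf{Main obstacle.} The difficulty is not the inequalities but justifying the definitions consistently. $\epsilon_{\mathrm{align}}$ must be the same quantity in both bounds, so it must be defined at the target token $h_t$ and not at a reference. The imputation must also be a convex combination, since unnormalized attention weights would introduce a factor $\sum_r w_r$. I would handle this by stating these assumptions explicitly at the start, using softmax-normalized attention weights.

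A secondary subtlety is that $K>1$ routing makes cell membership ambiguous. I would restrict the statement to the top-1 expert's cell. Alternatively, I would take $\mathcal{V}_k$ to be the region where $k$ is among the top-$K$ experts and redefine $D_k$ accordingly; the argument is unchanged.
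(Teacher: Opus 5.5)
Your argument is correct under the assumptions you state, but it routes the triangle inequality through a different intermediate point than the paper does. I index references by $r$, as you do.

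The paper uses no learned map: $\hat h_m=\sum_r w_r h_{v,r}$ is a convex combination of the raw observed representations. After bounding $E_{\mathrm{local}}\le\sum_r w_r\|h_{v,r}-h_t^{*}\|_2$, it inserts each \emph{reference's} own ideal missing-modality representation $h_r^{*}$. This gives $\|h_{v,r}-h_t^{*}\|_2\le\|h_{v,r}-h_r^{*}\|_2+\|h_r^{*}-h_t^{*}\|_2\le\epsilon_{\mathrm{align}}+L_{\rm lip}\|h_{v,r}-h_{v,t}\|_2$. Here $\epsilon_{\mathrm{align}}$ is a uniform misalignment bound over all references (Assumption 1), and $L_{\rm lip}$ is a Lipschitz constant of the ideal cross-modal correspondence $h_v\mapsto h^{*}$ (Assumption 2). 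The diameter bound on $\mathcal{V}_k$ then finishes the argument as in your Step 2.

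You instead insert $g(h_t)$. This buys two things. You need alignment at only the single target point, and your Lipschitz assumption is on a map you actually learn rather than on an unobservable ideal correspondence. Your restriction argument ($L\le L_{\rm lip}$) also explains the statement's use of $L$ versus $L_{\rm lip}$; the paper simply uses $L_{\rm lip}$ in both bounds.

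The paper's route, in turn, matches TWIN's actual estimator, which averages $h_{v,r}$ directly. If you specialize your framework to that case ($g=\mathrm{id}$), you get $L=1$ and $\epsilon_{\mathrm{align}}=\|h_t-h_t^{*}\|_2$. That is the error of the routed (pre-imputed) target token itself, so the bound reads ``aggregate error $\le$ target's own error $+\,D_k$''. The paper's version keeps a genuine cross-modal constant and places the alignment burden on observed references.

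Finally, your claim that $\epsilon_{\mathrm{align}}$ \emph{must} be defined at the target is too strong. A uniform bound over all admissible references, as in the paper's Assumption 1, is equally the same quantity in both bounds.
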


Where \(h^*_{m,t}\) denotes the ideal representation of missing modality $m$ for target $t$, $\hat h_{m,t}$ is its imputed estimate, $\epsilon_{\rm align}$ bounds cross-modal misalignment, and $L_{\rm lip}$ is the cross-modal Lipschitz constant. Therefore, when $D_k < D_{\mathrm{global}}$, intra-expert imputation admits a strictly tighter worst-case reconstruction-error bound than global inter-expert imputation.

\begin{Remark}[Geometric Interpretation of Inter-Expert Shift]
For a reference representation $h_{v,g}$ routed to expert $j$ and a
target representation $h_{v,t}$ routed to expert $k$, their distance can
be decomposed as:

\begin{equation}
\label{eq:shift_error}
\begin{aligned}
    \left\| h_{v,g} - h_{v,t} \right\|_2
    &=
    \left\|
    (h_{v,g} - c_j)
    + (c_j - c_k)
    + (c_k - h_{v,t})
    \right\|_2 \\
    &\leq
    \underbrace{\|h_{v,g} - c_j\|_2}_{\text{Reference intra-cell deviation}}
    +
    \underbrace{\|c_j - c_k\|_2}_{\text{Expert centroid shift}}
    +
    \underbrace{\|c_k - h_{v,t}\|_2}_{\text{Target intra-cell deviation}}
\end{aligned}
\end{equation}

When the reference and target are routed to different experts ($j\neq k$), the additional centroid-separation term
$\|c_j-c_k\|_2$ may increase the geometric uncertainty of reconstruction. In contrast, intra-expert aggregation corresponds to $j=k$, such that $c_j=c_k$ and the centroid-shift term vanishes from the decomposition.
\end{Remark}

Although our theoretical analysis is developed for Sparse MoE, the proposed intra-expert imputation mechanism can also be instantiated with SoftMoE by restricting aggregation within the slots associated with the same expert. We therefore additionally evaluate this variant empirically to study whether the locality principle generalizes beyond discrete routing.

\begin{figure}
    \centering
    \includegraphics[width=\linewidth]{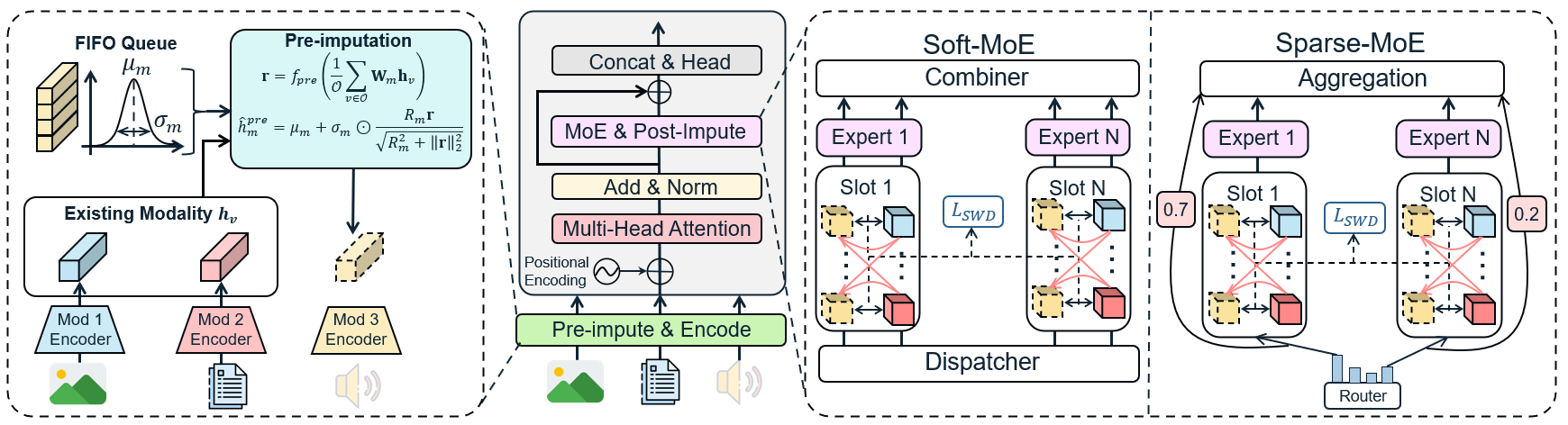}
    \caption{Overview of Proposed Method: (1) Starting at the encoding of modalities, where each modality will be assigned a unique encoder and implement pre-imputation if any modality is needed. Pre-imputation learns the imputed coefficient derived from existing modalities to impute the missing modality from its modality distribution. (2) Inspired by the theoretical motivation, post-imputation conducted intra-expert imputation before each token is forwarded through the expert. An SWD loss is applied to enforce the semantic alignment of tokens in the slots.}
    \label{fig:framework}
\end{figure}

\subsection{Pre-Imputation}
\label{sec:pre-imputate}

We formulate pre-imputation as a context-guided distributional shifting problem in the embedding space. Given an instance with a missing modality $h_m$ and a set of observed modalities $\mathcal{O}$, each available modality $h_v$ is first encoded into an embedding sequence $\mathbf{h}_v \in \mathbb{R}^{s \times d}$, where $v \in \mathcal{O}$, $s$ is the sequence length, and $d$ is the embedding dimension. Since the original embedding of the missing modality is unavailable, we do not aim to reconstruct an unknown instance-level target. Instead, we construct a context-dependent embedding $\hat{\mathbf{h}}^{\mathrm{pre}}_m$ that is constrained by the empirical distribution of modality $m$. For each modality $m$, we maintain a fixed-size FIFO queue $\mathcal{Q}_m$ containing recently observed encoder embedding sequences, providing modality-specific empirical mean $\mu_m$ and standard deviation $\sigma_m$. We estimate the centroid and standard deviation of the observed embedding bank as:

\begin{equation}
\boldsymbol{\mu}_m =
\frac{1}{|\mathcal{Q}_m|}
\sum_{h \in \mathcal{Q}_m}
h,
\qquad
\boldsymbol{\sigma}_m =
\sqrt{
\frac{1}{|\mathcal{Q}_m|}
\sum_{h \in \mathcal{Q}_m}
(h-\boldsymbol{\mu}_m)^2
}
\label{eq:buffer_statistics}
\end{equation}

We further characterize the robust extent of the observed modality distribution. Specifically, we compute the standardized distance of every observed token in the buffer and define $R_m$ as its $q$-th quantile:

\begin{equation}
R_m =
\operatorname{Quantile}_{q}
\left(
\left\{
\left\|
\frac{\mathbf{h}_{m,t}-\boldsymbol{\mu}_{m,t}}
{\boldsymbol{\sigma}_{m,t}}
\right\|_2
\;\middle|\;
\mathbf{h}_m \in \mathcal{Q}_m,\;
t=1,\ldots,s
\right\}
\right)
\label{eq:robust_radius}
\end{equation}

where $q=0.8$ in our experiments. The statistics $\boldsymbol{\mu}_m$ and $\boldsymbol{\sigma}_m$ describe the location and scale of the observed embedding distribution, whereas $R_m$ defines a robust standardized radius estimated from currently observed modality embeddings. To determine the imputation direction for the current instance, we aggregate the projected embeddings of its available modalities and predict an unconstrained residual $\mathbf{u}_m$ using a lightweight pre-imputation network $f_{\mathrm{pre}}$. The final pre-imputed embedding is obtained by radially soft-clipping this residual before shifting it from the empirical centroid:

\begin{equation}
\begin{aligned}
\mathbf{r} &=
f_{\mathrm{pre}}
\left( \frac{1}{|\mathcal{O}|}
\sum_{v\in\mathcal{O}} \mathbf{W}_m \mathbf{h}_v \right), 
\quad
\hat{\mathbf{h}}^{\mathrm{pre}}_m = \boldsymbol{\mu}_m + \boldsymbol{\sigma}_m \odot \frac{R_m\mathbf{r}}{\sqrt{R_m^2+\left\| \mathbf{r} \right\|_2^2}}
\label{eq:pre_imputation}
\end{aligned}
\end{equation}

The radial soft-clipping operation is applied token-wise. It retains the direction predicted from the available modalities while smoothly constraining the standardized residual magnitude below $R_m$. In contrast to an element-wise bounded activation such as $\tanh$, this operation does not independently saturate each embedding dimension. Consequently, the pre-imputed embedding can adapt to the current multimodal context without drifting arbitrarily far from the robust empirical geometry of the target modality. The resulting $\hat{\mathbf{h}}^{\mathrm{pre}}_m$ is directly passed to the MoE backbone; it is not decoded into raw modality data or processed again by the target modality encoder. The FIFO queue $\mathcal{O}_m$ is detached from backpropagation and updated only with embeddings from observed modalities during training. At inference time, the saved modality statistics are fixed, and no reference embedding from the unknown target modality is required.

\subsection{Post-Imputation}
\label{sec:post-imputate}
To refine the distribution-constrained initialization $\hat{\mathbf{h}}^{\mathrm{pre}}_m$ produced by the pre-imputation stage, we introduce an intra-expert post-imputation mechanism. Although pre-imputation constructs a plausible representation for the missing modality, it is performed before Sparse MoE routing and does not explicitly exploit the local semantic structure induced by individual experts. According to Theorem~\ref{thm:local_imputation}, restricting the reference and target representations to the same expert removes this inter-expert centroid shift and provides the motivation for our localized post-imputation design.

Since all modalities are processed by the same router, their representations are evaluated in a common expert-routing space. For a pre-imputed missing-modality representation routed to expert $k$, we only aggregate observed tokens from the same instance that are also routed to expert $k$. Consequently, both the missing representation and its reference context are associated with the same local expert region. Rather than performing global aggregation over tokens from geometrically distinct regions, we use a residual feature correction based on the average observed representation within the same expert. This design preserves the local expert opinion while avoiding unconstrained inter-expert information transfer. The theoretical analysis motivates where the reference representations should be selected from, rather than prescribing a specific aggregation operator. Let $\mathcal{R}_k = \{g | h_{v,g} \in \mathcal{V}_k\}$ denote the local set of observed tokens co-located in the $k$-th expert region. we instantiate the local aggregation using either averaging or cross-attention to produce convex combination weight $\sum_{g \in \mathcal{R}_k} w_g =1$ for post-imputation.

\begin{equation}
\bar{\mathbf{h}}_k =
 \sum_{g\in \mathcal{R}_k} w_g h_{v,g}
\qquad
\tilde{\mathbf{h}}^{k}_m =
\operatorname{LN}
\left(
\hat{\mathbf{h}}^{\mathrm{pre}}_m +
\bar{\mathbf{h}}_k
\right).
\label{eq:post_imputation}
\end{equation}

Here, $\bar{\mathbf{h}}_k$ represents the local observed context for expert $k$, and $\tilde{\mathbf{h}}^{(k)}_m$ is the representation subsequently processed by its expert FFN. This operation is applied token-wise. When no observed token is co-located with the missing token in an expert region, the model retains the pre-imputed representation without an additional reference correction. To ensure that post-imputed representations remain compatible with the observed distribution of each modality, we applied an unpaired loss function to controlled sliced Wasserstein distance (SWD) loss \cite{heitz2021sliced}. One advantage of SWD loss is that we don't need to assume the target distribution like KL divergence loss. Under unpaired condition, we have no observation on target embedding of missing modality, but we do observe the empirical distribution of the modality. Thus, we can only constraint the overall geometry of imputed modality instead of matching the exact embedding. For modality $a$, let $\mathcal{H}^{\mathrm{obs}}_a$ and $\mathcal{H}^{\mathrm{imp}}_a$ denote the sets of mean-pooled post-imputation representations from observed and missing samples, respectively. Given $L$ random normalized projection directions, let $Q_l(\mathcal{H})$ denote the empirical quantiles of the projected set $\mathcal{H}$ along the $l$-th direction. We define the modality-wise distribution matching loss as:

\begin{equation}
\mathcal{L}_{\mathrm{SWD}} =
\frac{1}{|\mathcal{M}|}
\sum_{m \in \mathcal{M}}
\frac{1}{L}
\sum_{l=1}^{L}
\left\|
Q_l\left(\mathcal{H}^{\mathrm{obs}}_a\right)
-
Q_l\left(\mathcal{H}^{\mathrm{imp}}_a\right)
\right\|_2^2,
\label{eq:swd_loss}
\end{equation}

where $\mathcal{M}$ contains the modalities with both observed and imputed samples in the current mini-batch. The observed representations $\mathcal{H}^{\mathrm{obs}}_a$ are treated as detached empirical samples, such that the SWD gradient only updates the imputed representations. This objective does not require paired observed and imputed samples; it aligns their modality-wise distributions through multiple one-dimensional projections.

\section{Experiment} \label{sec:experiment}

\begin{table}[h]
  \caption{Experiment setting I: Main results on MIMIC-III}
  \label{tb:mimic-3-experiment}
  \centering
  \resizebox{\textwidth}{!}{%
  \begin{tabular}{cc|cccccccc>{\columncolor{gray!20}}c>{\columncolor{gray!20}}c}
    \toprule
    Task   & Metric & SMIL & ShaSpec  & TF & LlMoE & FuseMoE-S & FuseMoE-L & FlexMoE & ConfSMoE & \textbf{TWIN-SF} & \textbf{TWIN-SP} \\
    \midrule
    \multirow{2}{*}{48-IHM} & F1  & 40.36 $\pm$ 2.10 & 38.06 $\pm$ 0.51 & 11.55 $\pm$ 0.65 & 45.08 $\pm$ 0.71 & 41.34 $\pm$ 0.33 & 42.70 $\pm$ 0.61 & 41.04 $\pm$ 0.96 & 44.12 $\pm$ 0.33 & 46.95 $\pm$ 0.81 & \textbf{49.82 $\pm$ 0.82}  \\
                            & AUC & 81.25 $\pm$ 0.87 & 79.16 $\pm$ 0.15 & 72.92 $\pm$ 0.55 & 83.96 $\pm$ 0.10 & 82.93 $\pm$ 0.76 & 81.82 $\pm$ 1.02 & 83.17 $\pm$ 0.46 & 83.53 $\pm$ 0.20 & 85.11 $\pm$ 0.38 & \textbf{85.56 $\pm$ 0.11}  \\
    \midrule
    \multirow{2}{*}{LOS}    & F1  & 64.87 $\pm$ 0.46 & 37.97 $\pm$ 2.18 & 49.87 $\pm$ 1.22 & 62.67 $\pm$ 0.89 & 62.64 $\pm$ 0.15 & 62.76 $\pm$ 0.13 & 61.26 $\pm$ 0.72 & 63.72 $\pm$ 0.24 & 65.93 $\pm$ 0.30 & \textbf{66.42 $\pm$ 0.49} \\
                            & AUC & 79.80 $\pm$ 0.56 & 77.17 $\pm$ 0.30 & 69.94 $\pm$ 0.57 & 79.71 $\pm$ 0.21 & 78.35 $\pm$ 0.33 & 78.31 $\pm$ 0.45 & 77.23 $\pm$ 0.69 & 80.27 $\pm$ 0.82 & \textbf{81.66 $\pm$ 0.38} & 81.49 $\pm$ 0.32  \\
    \midrule
    \multirow{2}{*}{25-PHE} & F1  & 28.70 $\pm$ 0.90 & 19.77 $\pm$ 0.72 & 5.19 $\pm$ 0.35  & 34.41 $\pm$ 0.28 & 31.16 $\pm$ 0.20 & 32.53 $\pm$ 0.90 & 36.61 $\pm$ 0.77 & 36.74 $\pm$ 1.59 & \textbf{38.63 $\pm$ 0.57} & 38.11 $\pm$ 0.45  \\
                            & AUC & 75.75 $\pm$ 0.26 & 71.66 $\pm$ 1.33 & 66.00 $\pm$ 0.12 & 78.19 $\pm$ 0.41 & 78.78 $\pm$ 0.16 & 78.55 $\pm$ 0.10 & 78.01 $\pm$ 0.11 & 78.52 $\pm$ 0.49 & \textbf{80.20 $\pm$ 0.10} & 79.81 $\pm$ 0.29  \\
    \bottomrule
  \end{tabular}
  }
\end{table}

\begin{table}[h]
  \caption{Experiment setting II: Main results on CMU-MOSEI}
  \label{tb:CMU-MOSEI_Experiment II}
  \centering
  \resizebox{\textwidth}{!}{%
  \begin{tabular}{cc|ccccccc>{\columncolor{gray!20}}c>{\columncolor{gray!20}}c}
    \toprule
    Missing Rate   & Metric & ShaSpec & TF & LlMoE & FuseMoE-S & FuseMoE-L & FlexMoE & ConfSMoE & \textbf{TWIN-SF} & \textbf{TWIN-SP}  \\
    \midrule
    \multirow{2}{*}{0\%}  & F1  & 47.48 $\pm$ 0.27 & 38.89 $\pm$ 0.54 & 59.13 $\pm$ 0.52 & 57.04 $\pm$ 0.15 & 58.71 $\pm$ 0.23 & 60.62 $\pm$ 0.96 & 61.31 $\pm$ 0.14 & 61.75 $\pm$ 0.13 & \textbf{62.17 $\pm$ 0.44} \\
                          & AUC & 67.93 $\pm$ 0.56 & 64.26 $\pm$ 0.61 & 80.29 $\pm$ 0.31 & 78.54 $\pm$ 0.42 & 78.13 $\pm$ 0.34 & 81.34 $\pm$ 0.20 & 82.87 $\pm$ 0.10 & 82.68 $\pm$ 0.16 & \textbf{83.48 $\pm$ 0.28} \\
    \midrule
    \multirow{2}{*}{10\%} & F1  & 47.41 $\pm$ 0.18 & 34.94 $\pm$ 1.10 & 57.46 $\pm$ 1.10 & 55.90 $\pm$ 0.55 & 56.68 $\pm$ 0.31 & 58.66 $\pm$ 1.10 & 59.81 $\pm$ 0.34 & 59.40 $\pm$ 0.66 & \textbf{60.65 $\pm$ 0.22} \\
                          & AUC & 69.26 $\pm$ 0.22 & 61.05 $\pm$ 0.34 & 79.38 $\pm$ 0.94 & 76.38 $\pm$ 0.39 & 76.44 $\pm$ 0.12 & 79.48 $\pm$ 0.23 & 81.00 $\pm$ 0.10 & 80.78 $\pm$ 0.15 & \textbf{81.90 $\pm$ 0.05} \\
    \midrule
    
    \multirow{2}{*}{20\%} & F1  & 46.54 $\pm$ 1.04 & 35.37 $\pm$ 0.94 & 55.35 $\pm$ 0.88 & 54.44 $\pm$ 0.27 & 54.60 $\pm$ 0.19 & 55.96 $\pm$ 1.67 & 58.06 $\pm$ 0.21 & 56.81 $\pm$ 0.60 & \textbf{58.29 $\pm$ 0.64} \\
                          & AUC & 68.05 $\pm$ 0.56 & 59.54 $\pm$ 0.22 & 77.29 $\pm$ 0.47 & 74.32 $\pm$ 0.16 & 74.92 $\pm$ 0.64 & 78.05 $\pm$ 0.10 & 78.85 $\pm$ 0.28 & 78.63 $\pm$ 0.28 & \textbf{78.73 $\pm$ 0.21} \\
    \midrule

    \multirow{2}{*}{30\%} & F1  & 45.05 $\pm$ 0.90 & 35.36 $\pm$ 0.84 & 53.93 $\pm$ 1.13 & 52.27 $\pm$ 0.35 & 49.92 $\pm$ 0.36 & 52.00 $\pm$ 0.47 & 54.16 $\pm$ 0.49 & 54.71 $\pm$ 0.90 & \textbf{56.16 $\pm$ 0.31} \\
                          & AUC & 66.89 $\pm$ 0.31 & 60.23 $\pm$ 0.31 & 76.50 $\pm$ 0.31 & 72.22 $\pm$ 0.52 & 72.52 $\pm$ 0.29 & 76.14 $\pm$ 0.38 & 76.91 $\pm$ 0.41 & 76.62 $\pm$ 0.41 & \textbf{77.04 $\pm$ 0.48} \\
    \midrule
    
    \multirow{2}{*}{40\%} & F1  & 43.49 $\pm$ 0.95 & 32.23 $\pm$ 0.61 & 50.95 $\pm$ 0.99 & 49.32 $\pm$ 0.17 & 49.36 $\pm$ 0.60 & 49.29 $\pm$ 0.28 & 50.72 $\pm$ 0.37 & 52.48 $\pm$ 0.85 & \textbf{52.34 $\pm$ 0.18} \\
                          & AUC & 65.32 $\pm$ 0.59 & 60.09 $\pm$ 0.35 & 73.24 $\pm$ 0.76 & 69.85 $\pm$ 0.11 & 70.05 $\pm$ 0.46 & 72.08 $\pm$ 1.27 & 73.63 $\pm$ 0.48 & 74.52 $\pm$ 0.53 & \textbf{74.40 $\pm$ 0.32} \\
    \midrule
    
    \multirow{2}{*}{50\%} & F1  & 42.75 $\pm$ 1.22 & 33.01 $\pm$ 0.51 & 46.69 $\pm$ 0.63 & 43.08 $\pm$ 0.47 & 44.44 $\pm$ 0.36 & 46.19 $\pm$ 0.34 & 47.31 $\pm$ 0.20 & 49.84 $\pm$ 0.80 & \textbf{49.49 $\pm$ 0.17} \\
                          & AUC & 62.32 $\pm$ 0.42 & 58.20 $\pm$ 0.17 & 70.90 $\pm$ 0.82 & 67.63 $\pm$ 0.55 & 67.44 $\pm$ 0.16 & 70.15 $\pm$ 1.10 & 70.53 $\pm$ 0.33 & 71.26 $\pm$ 0.34 & \textbf{71.31 $\pm$ 0.22} \\
    \bottomrule
  \end{tabular}
  }
\end{table}

\subsection{Primary Results}

\textbf{Experimental Setting I} evaluates all methods under the naturally occurring missing-modality patterns in the clinical datasets. As shown in Tables \ref{tb:mimic-3-experiment} and \ref{tb:mimic-4-experiment}, we evaluate TWIN on MIMIC-III and MIMIC-IV across three clinical prediction tasks: 48-IHM, LOS, and 25-PHE. Overall, TWIN consistently achieves the strongest performance across both datasets and different task settings. On MIMIC-III, TWIN improves over the strongest baseline by 4.74 F1 points and 1.60 AUC points on 48-IHM, while also achieving improvements of 1.55 and 1.89 F1 points on LOS and 25-PHE, respectively. Similar gains are observed on MIMIC-IV, where TWIN improves 48-IHM F1 from 43.76 to 47.05 and consistently achieves the best or comparable results on the remaining tasks. Notably, the best-performing backbone varies across tasks, with both TWIN-SF and TWIN-SP contributing the strongest results, suggesting that the proposed imputation framework is effective across different MoE routing mechanisms. These results demonstrate that TWIN remains robust under naturally occurring modality missingness in real-world clinical data.

\textbf{Experimental Setting II} randomly removes a fixed proportion of modalities during both training and testing while ensuring that at least one modality remains observed. As shown in Tables \ref{tb:CMU-MOSI_Experiment II} and \ref{tb:CMU-MOSEI_Experiment II}, we evaluate all methods under missing ratios ranging from 0\% to 50\%. As the missing ratio increases, the performance of all methods gradually degrades, while TWIN exhibits a more graceful decline and maintains a clear advantage under severe missingness. In particular, at a 50\% missing ratio, TWIN-SP achieves 46.62 F1 on CMU-MOSI compared with 44.34 for ConfSMoE, while TWIN-SF achieves the highest AUC of 71.03. A similar trend is observed on CMU-MOSEI, where TWIN-SF reaches 49.84 F1 and TWIN-SP achieves 71.31 AUC at the same missing ratio, outperforming the strongest competing baselines. These results indicate that the proposed two-stage imputation framework remains effective even when the available multimodal context becomes substantially limited. We also observe that the relative performance of the Sparse and Soft MoE variants varies across missing ratios and evaluation metrics, suggesting that TWIN is compatible with different routing mechanisms rather than relying on a specific MoE backbone.

\begin{figure}
    \centering
    \includegraphics[width=\linewidth]{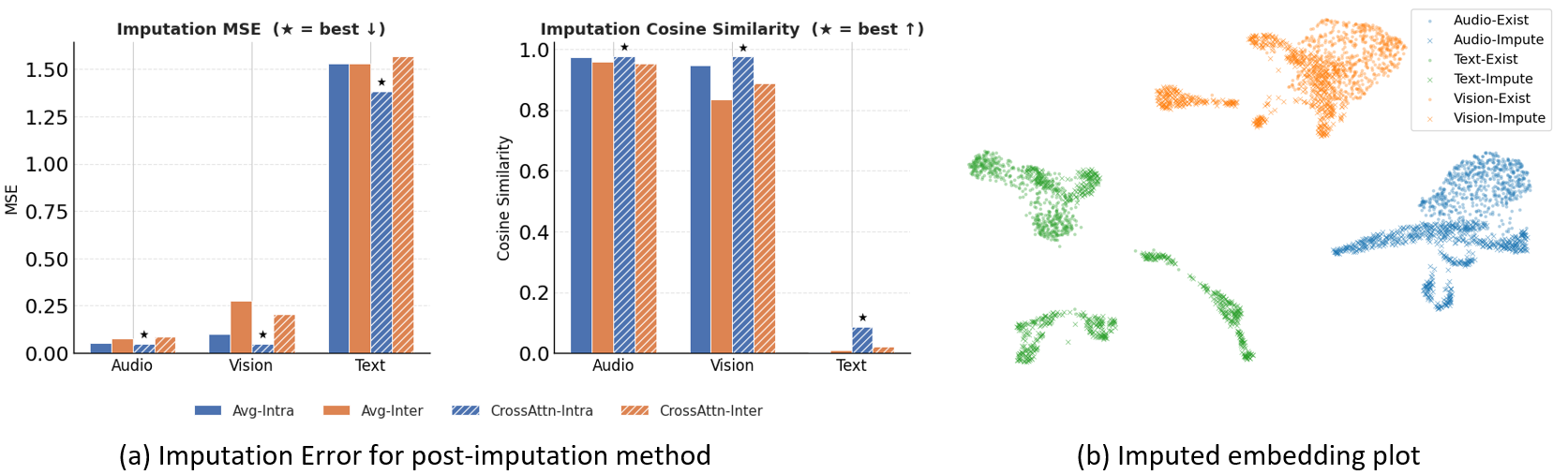}
    \caption{Imputation Error of different post-imputation method. We take two post imputation method cross-attention and average, and compare their combination of inter- and intra-expert imputation.}
    \label{fig:imputation_error}
\end{figure}
\textbf{Experimental Setting III} evaluates a more challenging missing-modality scenario by applying modality dropout during training and testing the model with only one or two available modalities. The results show that TWIN remains robust even when entire modalities are unavailable. On CMU-MOSI, TWIN achieves the best AUC and F1 performance across all evaluated modality combinations, with particularly large improvements when only limited modality information is available. For example, under the Video-Audio setting, TWIN improves AUC from 57.61 for the strongest baseline to 69.90, while also increasing F1 from 37.37 to 45.10. On CMU-MOSEI, TWIN achieves the best or comparable performance in nearly all modality configurations, demonstrating that the proposed imputation framework generalizes well across different missingness patterns. The advantage remains evident even in single-modality settings, indicating that TWIN can effectively exploit the limited observed information to construct useful representations for the missing modalities. Overall, these results demonstrate the robustness of TWIN under severe and asymmetric modality absence.

\subsection{How well our imputation can be}

\begin{wraptable}{r}{0.6\textwidth}
    \centering
    \caption{F1 score comparison of different backbones and imputation strategy on CMU-MOSI (Experiment Setting II, missing ratio $0.5$)}
    \resizebox{\linewidth}{!}{%
    \label{tab:post-imput}
    \begin{tabular}{c|cccc}
        \toprule
         \multirow{2}{*}{Post Imput.}    & \multicolumn{2}{c}{Sparse MoE} & \multicolumn{2}{c}{Soft MoE} \\
         
                       & Average  & Cross-Attention & Average & Cross-Attention \\
         \midrule 
         Inter-Expert  & 45.37 $\pm$ 2.09 & 45.41 $\pm$ 2.20 & 44.59 $\pm$ 2.37 & 44.55 $\pm$ 2.39  \\
         \midrule
         Intra-Expert  & 46.18 $\pm$ 3.23 & 46.26 $\pm$ 2.97 & 45.98 $\pm$ 1.70 & 46.35 $\pm$ 0.66 \\
         \bottomrule
    \end{tabular}}
\end{wraptable}
\textbf{Imputation Error}: The comparison in Figure \ref{fig:imputation_error} provides complementary evidence beyond downstream classification performance. Intra-expert aggregation consistently reduces reconstruction error relative to inter-expert aggregation, indicating that restricting reference tokens to the same routing region provides a cleaner reconstruction context. Cross-attention further improves the result because it does not treat all co-routed references equally, but adaptively assigns larger weights to references that are more relevant to the current missing representation. In contrast, averaging assumes uniform contribution from all available references within the region. The larger reconstruction error observed for text may reflect its higher semantic variability and the difficulty of recovering discrete linguistic information solely from audio-visual context. The UMAP visualization in Figure \ref{fig:imputation_error} (b) further suggests that the imputed representations remain close to their corresponding modality clusters rather than drifting toward unrelated modality regions. We interpret this observation as empirical support for the locality principle underlying TWIN, rather than as direct evidence of exact reconstruction of the unavailable modality.

\textbf{Different Approaches in Pre-Imputation}: As discussed in Section~\ref{sec:pre-imputate}, the pre-imputation module requires an activation function that constrains the estimated representation to remain consistent with the distribution of the target modality. To investigate this effect, we compare different activation functions and report their downstream performance in Table~\ref{tab:preimpute}. The results show that the prototype-based strategy adopted in our method consistently outperforms both bounded and unbounded activation functions. Moreover, most bounded activations achieve better performance than unbounded functions such as Arcsinh, suggesting that constraining the output range is beneficial for modality-aware pre-imputation. This is because bounded functions can restrict the imputed representation within a controlled distributional range, reducing the risk of drifting away from the target modality distribution during reconstruction.

\textbf{Different Approaches in Post-Imputation}: To study which configuration is optimal, we implement an empirical study on post-imputation, as shown in Table \ref{tab:post-imput}, we can clearly see that the intra-expert imputation consistently outperforms the inter-expert under different combination of post-imputation setting, while the cross-attention also wins average aggregation. This is because the adaptive aggregation weights that cross-attention can apply. These results empirically support the geometric motivation of Theorem \ref{thm:local_imputation}: restricting references to the same expert region generally reduces reconstruction error and improves downstream performance.

\subsection{Component Ablation and Compute-Controlled Efficiency Analysis}

\textbf{Component Ablation}: We implement ablation study on our designed component in Table \ref{tab:module_dropout}. The ablation study is implemented on CMU-MOSI missing ratio 50\% and  We can observe that pre- and post-imputation indeed impacts the downstream task. Specifically, we drop the pre imputation module and the performance drops in around 1\% to 2\% in both F1 and AUC in MIMIC-III. More importantly, once the pre- and post- imputation module work together, we can observe that the performance reaches the highest, which conclude that both module can produce positive synergy effect.

\textbf{Compute-Controlled Efficiency Analysis}: Table~\ref{tab:complexity} compares model efficiency under approximately matched computational budgets. Across small, medium, and large settings, TWIN maintains MFLOPs comparable to FlexMoE and ConfSMoE while using substantially less parameters. For example, in the medium setting, TWIN uses only about 2.8M parameters compared with the competing MoE with 11M parameter methods under the similar MFLOPs, while TWIN-SP achieves the highest F1 score of 45.78. A similar trend is observed in the large setting, where TWIN-SP reaches 47.47 F1 with about 13.9M parameters, compared with over 33M parameters for FlexMoE and ConfSMoE. These results demonstrate that TWIN provides a favorable trade-off between computational cost, parameter efficiency, and predictive performance. LIMoE demonstrated different complexity scale to other baselines because we found it is difficult to make it small without changes of architecture significantly. But it is still not comparable to our proposed method.

\begin{table}[ht]
\caption{Module Ablation on MIMIC-III IHM task and CMU-MOSI (Experiment Setting II, missing ratio $0.5$)}
\label{tab:module_dropout}
\centering
\resizebox{\linewidth}{!}{%
\begin{tabular}{c|c|cccc|cccc}
  \toprule
   \multirow{2}{*}{Variants}    & \multirow{2}{*}{Metrics}   & \multicolumn{4}{c|}{Sparse MoE} & \multicolumn{4}{c}{Soft MoE} \\
                                &  & w/o pre & w/o post & w/o impute. & TWIN  & w/o pre & w/o post & w/o impute. & TWIN  \\
    \midrule
  \multirow{2}{*}{MIMIC-III}    & F1    & 48.24 $\pm$ 0.38 & 48.65 $\pm$ 0.25 & 44.90 $\pm$ 1.61 & 49.82 $\pm$ 0.82 & 46.28 $\pm$ 0.99 & 46.01 $\pm$ 0.60 & 44.78 $\pm$ 1.09 & 46.95 $\pm$ 0.81 \\
                                & AUC   & 84.37 $\pm$ 0.26 & 85.38 $\pm$ 0.28 & 84.30 $\pm$ 0.84 & 85.56 $\pm$ 0.11 & 84.06 $\pm$ 0.75 & 83.89 $\pm$ 0.39 & 83.90 $\pm$ 0.26 & 85.11 $\pm$ 0.38 \\
    \midrule
  \multirow{2}{*}{CMU-MOSI} 
                                & F1     & 45.59 $\pm$ 1.83 & 45.01 $\pm$ 2.03 & 45.01 $\pm$ 0.16 & 46.62 $\pm$ 4.43 & 44.42 $\pm$ 2.02 & 45.25 $\pm$ 3.33 & 44.03 $\pm$ 1.89 & 44.29 $\pm$ 1.92 \\
                                & AUC    & 69.94 $\pm$ 0.75 & 70.30 $\pm$ 1.52 & 69.39 $\pm$ 0.31 & 70.50 $\pm$ 1.30 & 69.59 $\pm$ 0.45 & 70.47 $\pm$ 0.39 & 67.86 $\pm$ 0.80 & 71.03 $\pm$ 2.43 \\
  \bottomrule
\end{tabular}
}
\end{table}

\begin{table}[h] 
    \centering
    \caption{Control Complexity Study: We control the complexity of the model to evaluate performance}
    \label{tab:complexity}
    \resizebox{\linewidth}{!}{%
    \begin{tabular}{c|c|cccccc>{\columncolor{gray!20}}c>{\columncolor{gray!20}}c}
        \toprule
         Scale & Metrics   & ShaSpec & TF & LlMoE & FuseMoE & FlexMoE & ConfSMoE & \textbf{TWIN-SF} & \textbf{TWIN-SP} \\
         \midrule 
         \multirow{3}{*}{Small}              & MFLOPs    & 21.73 & 39.69 & 1245.03 & 30.18 & 40.32 & 39.62 & 39.34 & 39.88 \\
         & \#Params                          & 2,177,406 & 9,921,903 & 2,130,227 & 9,636,174 & 2,184,963 & 2,184,730 & 335,691 & 336.376 \\
         & F1 score                          & 39.12 $\pm$ 1.21 & 35.81 $\pm$ 0.85 & 41.11 $\pm$ 0.90 & 33.85 $\pm$ 0.76 & 42.12 $\pm$ 1.18 & 44.34 $\pm$ 0.77 & 43.69 $\pm$ 0.60 & 44.72 $\pm$ 0.73 \\
         \midrule 
         \multirow{3}{*}{Medium} & MFLOPs    & 111.27 & 263.53 & 6803,84 & 286.02 & 263.94 & 268.62 & 267.04 & 267.45 \\
         & \#Params                          & 11,137,046 & 65,878,703 & 11,469,437 & 11,524,392 & 11,371,473 & 11,330,062 & 2,789,275 & 2,794,759  \\
         & F1 score                          & 41.56 $\pm$ 0.67 & 38.62 $\pm$ 0.38 & 43.21 $\pm$ 0.56 & 39.44 $\pm$ 0.81 & 43.78 $\pm$ 0.36 & 45.37 $\pm$ 1.73 &  44.88 $\pm$ 1.10 &  45.78 $\pm$ 2.22 \\
         \midrule
         \multirow{3}{*}{Large} &  MFLOPs    & 331.87           & 1021.15           & 19668.02         & 1032.81           & 1181.48          & 1078.82          & 1054.20    &  1064.16  \\
                                & \#Params   & 33,203,156       & 255,274,503       & 33,000,827       & 378,252,012       & 33,385,803       & 33,398,654       & 13,722,627 & 13,864,291   \\
                                & F1 score   & 42.82 $\pm$ 0.74 & 37.64 $\pm$ 0.88 & 43.45 $\pm$ 0.82 & 41.62 $\pm$ 0.84 & 43.46 $\pm$ 1.36 & 45.55 $\pm$ 1.69 & 45.63 $\pm$ 1.18 & 47.47 $\pm$ 1.67 \\
         \bottomrule
    \end{tabular}}
    \label{tab:complexity on CMU-MOSI}
\end{table}

\section{Conclusion}
In this work, we propose TWIN, a two-stage intra-expert imputation framework for robust multimodal learning with missing modalities. TWIN combines distribution-aware pre-imputation with locality-aware post-imputation and can be integrated with both Sparse and Soft MoE backbones. Our geometric analysis shows that, when expert-specific routing regions are more compact than the global representation space, intra-expert aggregation admits a tighter worst-case reconstruction-error bound than unconstrained global aggregation. Experiments across four multimodal benchmarks and multiple missingness settings demonstrate that TWIN consistently achieves state-of-the-art performance while maintaining strong parameter efficiency. These results highlight the effectiveness of exploiting expert-local representation structure for incomplete multimodal learning.

\section{AI disclosure}
In this work, we used generative AI tools for assist in the writing of proofs, refine hypotheses, provide feedback on research  methodology or experiments, assist with translation. We have not used generative AI tools for help develop theoretical models or conceptual frameworks, formulate mathematical claims, propose hypotheses, design research  methodology or experiments, implement methods, clean and reformat dataset, interpret results, and support qualitative and thematic data analysis,  are not applicable to this work. Additionally, we used generative AI tools for editing the research paper to improve readability and identify relevant literature, which all AI generated content have been checked by more than two authors.. We have reviewed all AI-assisted work. We take responsibility for the final content of this work, including text, claims or artifacts produced with the aid of generative AI.

\bibliography{iclr2027_conference}
\bibliographystyle{iclr2027_conference}

\newpage
\appendix

\section{Structure of SoftMoE}
Specifically, given an input data sequence $\mathbf{x} \in \mathbb{R}^{s \times d}$, SoftMoE introduces an \textit{expert-choice} routing paradigm, operating on $N$ experts with $p$ slots per expert. Formally, the routing mechanism is parameterized by learnable slot parameters $\Phi \in \mathbb{R}^{d \times N \times p}$, where $\Phi_{j, k} \in \mathbb{R}^d$ denotes the parameter vector for the $k$-th slot of the $j$-th expert. To select tokens from the perspective of experts, SoftMoE computes dispatch weights $D_{i, j, k}$ by applying a softmax over the sequence dimension $s$:

\begin{equation} 
    D_{i, j, k} = \frac{\exp(\mathbf{x}_i \cdot \Phi_{j, k})}{\sum_{i'=1}^{s} \exp(\mathbf{x}_{i'} \cdot \Phi_{j, k})}, \quad \tilde{\mathbf{x}}_{j, k} = \sum_{i=1}^{s} D_{i, j, k} \mathbf{x}_i
    \label{eq:dispatch_prob}
\end{equation}

where $i \in \{1, \dots, s\}$ is the token index, $j \in \{1, \dots, N\}$ is the expert index, and $k \in \{1, \dots, p\}$ is the slot index. Each slot $(j, k)$ then extracts a soft representation $\tilde{\mathbf{x}}_{j, k} \in \mathbb{R}^d$ by performing a probabilistically weighted sum of all input tokens. This expert-choice mechanism bypasses the discrete top-$K$ hard masking of Sparse MoE, enabling a fully differentiable, dense selection of tokens. Next, each expert $f_j$ processes its assigned slot inputs independently. Since SoftMoE fully activates all experts, the output for each slot $\mathbf{y}_{j, k} \in \mathbb{R}^d$ is computed as:

\begin{equation}
    \mathbf{y}_{j, k} = f_j(\tilde{\mathbf{x}}_{j, k}), \quad \forall j \in \{1, \dots, N\}, k \in \{1, \dots, p\}
\end{equation}

where $f_j(\cdot)$ represents the Feed-Forward Network (FFN) block associated with the $j$-th expert. To reconstruct the final output sequence, these slot-level outputs must be aggregated back to the token level. Analogous to the dispatch step, combine weights $C \in \mathbb{R}^{s \times N \times p}$ are calculated by applying a softmax across the expert-slot dimensions ($N \times p$) for each token. Finally, the token-level output $\mathbf{z}_i \in \mathbb{R}^d$ is formulated as a weighted sum of all processed slot representations:

\begin{equation}
    C_{i, j, k} = \frac{\exp(\mathbf{x}_i \cdot \Phi_{j, k})}{\sum_{j'=1}^{N} \sum_{k'=1}^{p} \exp(\mathbf{x}_i \cdot \Phi_{j', k'})}, \quad \mathbf{z}_i = \sum_{j=1}^N \sum_{k=1}^p C_{i, j, k} \mathbf{y}_{j, k}
    \label{eq:combine_prob}
\end{equation}

This dual-softmax mechanism ensures that the information flow remains balanced and fully differentiable, establishing a continuous global aggregation counterpart to the localized discrete boundaries of Sparse MoE.

\section{Additional Theoretical Contribution}

\subsection{Proof of Proposition~\ref{prop:voronoi_partition}}
\label{app:voronoi_proof}

\begin{proof}
Consider a token $h\in\mathcal{H}$. The routing function assigns $h$ to expert $k$ when its routing logit is no smaller than that of any other expert:
\begin{equation}
    h^\top c_k + b_k
    \geq
    h^\top c_j + b_j,
    \qquad
    \forall j\neq k.
\end{equation}

From a geometric perspective, nearest-centroid assignment associates $h$ with centroid $c_k$ when
\begin{equation}
    \|h-c_k\|_2^2
    \leq
    \|h-c_j\|_2^2,
    \qquad
    \forall j\neq k.
\end{equation}

Expanding the squared Euclidean distances gives
\begin{align}
    \|h-c_k\|_2^2
    &\leq
    \|h-c_j\|_2^2 \nonumber\\
    \Longleftrightarrow\quad
    \|h\|_2^2-2h^\top c_k+\|c_k\|_2^2
    &\leq
    \|h\|_2^2-2h^\top c_j+\|c_j\|_2^2 \nonumber\\
    \Longleftrightarrow\quad
    h^\top c_k-\frac{1}{2}\|c_k\|_2^2
    &\geq
    h^\top c_j-\frac{1}{2}\|c_j\|_2^2.
\end{align}

Therefore, by parameterizing the routing bias as
\begin{equation}
    b_k=-\frac{1}{2}\|c_k\|_2^2,
\end{equation}
maximizing the routing logit is exactly equivalent to minimizing the squared Euclidean distance to the routing centroids:
\begin{equation}
    f_{\mathcal V}(h)
    =
    \arg\max_k
    \left(
    h^\top c_k-\frac{1}{2}\|c_k\|_2^2
    \right)
    =
    \arg\min_k
    \|h-c_k\|_2^2.
\end{equation}

Consequently, the routing region associated with expert $k$ coincides with the corresponding Voronoi region restricted to the feature space $\mathcal{H}$:
\begin{equation}
    \mathcal V_k
    =
    \left\{
    h\in\mathcal H
    \mid
    \|h-c_k\|_2
    \leq
    \|h-c_j\|_2,
    \ \forall j\neq k
    \right\}.
\end{equation}
The interiors of the Voronoi cells are disjoint, while adjacent cells may share boundary points corresponding to routing ties. Since $\mathcal V_k\subseteq\mathcal H$, its diameter satisfies $D_k\leq D_{\mathrm{global}}$. This establishes the geometric partition used in our subsequent analysis.
\end{proof}

\subsection{Proof of Theorem~\ref{thm:local_imputation}}
\label{app:theoretical_proof}

We first introduce two mild assumptions required for the theoretical
analysis.

\paragraph{Assumption 1 (Bounded Cross-Modal Alignment).}
Let $h_{v,g}$ denote the available-modality representation of instance
$g$, and let $h_g^{*}$ denote its corresponding ideal representation in
the missing modality. We assume that the residual cross-modal discrepancy
is bounded by
\begin{equation}
\|h_{v,g}-h_g^{*}\|_2
\leq
\epsilon_{\mathrm{align}},
\qquad \forall g,
\label{eq:alignment_assumption}
\end{equation}
where $\epsilon_{\mathrm{align}}\geq0$ measures the residual misalignment
between the two modalities in the shared representation space.

\paragraph{Assumption 2 (Cross-Modal Lipschitz Continuity).}
Let $t$ denote the target missing-modality token and $g$ index candidate observed reference tokens within the same multimodal instance, we assume that the distance between their ideal missing-modality representations is bounded by the distance between their available-modality representations:
\begin{equation}
\|h_g^{*}-h_t^{*}\|_2
\leq
L_{\rm lip}\|h_{v,g}-h_{v,t}\|_2,
\label{eq:lipschitz_assumption}
\end{equation}
where $L_{\rm lip}>0$ is a Lipschitz constant.

\begin{proof}

Under intra-expert imputation, the missing representation of the target
instance $t$ is reconstructed by a convex combination of reference
representations routed to the same Voronoi cell $V_k$:
\begin{equation}
\hat{h}_m = \sum_{g\in \mathcal{R}_k} w_g h_{v,g},
\qquad
w_g \geq 0,
\qquad
\sum_{g\in \mathcal{R}_k}w_g=1.
\label{eq:appendix_local_imputation}
\end{equation}

The reconstruction error is therefore
\begin{align}
E_{\mathrm{local}}
&=
\|\hat{h}_m-h_t^{*}\|_2
\nonumber\\
&=
\left\|
\sum_{g\in \mathcal{R}_k}w_gh_{v,g} - h_t^{*}
\right\|_2.
\end{align}

Since the aggregation weights form a convex combination,
$\sum_{g\in \mathcal{R}_k}w_g=1$, we have
\begin{align}
E_{\mathrm{local}}
&=
\left\|
\sum_{g\in \mathcal{R}_k}
w_g
\left(
h_{v,g}-h_t^{*}
\right)
\right\|_2
\nonumber\\
&\leq
\sum_{g\in \mathcal{R}_k}
w_g
\|h_{v,g}-h_t^{*}\|_2,
\label{eq:triangle_local}
\end{align}
where the inequality follows from the triangle inequality.

For each reference instance $g$, we introduce its corresponding ideal
missing-modality representation $h_g^{*}$. Then,
\begin{align}
\|h_{v,g}-h_t^{*}\|_2
&=
\|
(h_{v,g}-h_g^{*})
+
(h_g^{*}-h_t^{*})
\|_2
\nonumber\\
&\leq
\|h_{v,g}-h_g^{*}\|_2
+
\|h_g^{*}-h_t^{*}\|_2.
\label{eq:error_decomposition}
\end{align}

Applying Assumptions~1 and~2 gives
\begin{align}
\|h_{v,g}-h_t^{*}\|_2
&\leq
\epsilon_{\mathrm{align}}
+
L_{\rm lip}\|h_{v,g}-h_{v,t}\|_2.
\label{eq:alignment_lipschitz}
\end{align}

Under the Voronoi routing mechanism, the target representation
$h_{v,t}$ and all intra-expert references $h_{v,g}$ are routed to the
same expert $k$. Therefore,
\begin{equation}
h_{v,t},h_{v,g}\in V_k.
\end{equation}

By definition of the diameter of the $k$-th Voronoi cell,
\begin{equation}
D_k
=
\sup_{h_a,h_b\in V_k}
\|h_a - h_b\|_2,
\end{equation}
and consequently,
\begin{equation}
\|h_{v,g}-h_{v,t}\|_2
\leq
D_k.
\label{eq:cell_bound}
\end{equation}

Combining Eqs.~\eqref{eq:alignment_lipschitz} and
\eqref{eq:cell_bound}, we obtain
\begin{equation}
\|h_{v,g}-h_t^{*}\|_2
\leq
\epsilon_{\mathrm{align}}
+
L_{\rm lip} D_k.
\end{equation}

Substituting this result into Eq.~\eqref{eq:triangle_local} gives
\begin{align}
E_{\mathrm{local}}
&\leq
\sum_{g\in \mathcal{R}_k}
w_g
\left(
\epsilon_{\mathrm{align}}
+
L_{\rm lip} D_k
\right)
\nonumber\\
&=
\left(
\epsilon_{\mathrm{align}}
+
L_{\rm lip} D_k
\right)
\sum_{g\in \mathcal{R}_k}w_g
\nonumber\\
&=
\epsilon_{\mathrm{align}}
+
L_{\rm lip} D_k.
\end{align}

Hence,
\begin{equation}
\boxed{
E_{\mathrm{local}}
\leq
\epsilon_{\mathrm{align}}
+
L_{\rm lip} D_k.
}
\end{equation}

For unconstrained global imputation, reference representations can be
selected from the entire representation space $\mathcal{H}$. Let
\begin{equation}
D_{\mathrm{global}} = \sup_{h_a,h_b\in\mathcal{H}} \|h_a-h_b\|_2.
\end{equation}

Following the same derivation, the reference-target distance is bounded
only by
\begin{equation}
\|h_{v,g}-h_{v,t}\|_2 \leq D_{\mathrm{global}},
\end{equation}
which leads to
\begin{equation}
\boxed{
E_{\mathrm{global}}
\leq
\epsilon_{\mathrm{align}}
+
L_{\rm lip} D_{\mathrm{global}}.
}
\end{equation}

Since $V_k\subseteq\mathcal{H}$, we have $D_k\leq D_{\mathrm{global}}$. Furthermore, when the expert-specific
Voronoi cell is strictly more compact than the global representation
space, i.e.,
\begin{equation}
D_k<D_{\mathrm{global}},
\end{equation}
it follows that
\begin{equation}
\epsilon_{\mathrm{align}}+L_{\rm lip} D_k < \epsilon_{\mathrm{align}}+L_{\rm lip} D_{\mathrm{global}}.
\end{equation}

Therefore, intra-expert imputation admits a strictly tighter
worst-case reconstruction-error upper bound than unconstrained global
imputation.

\end{proof}

\section{Experiment Details}

\subsection{Dataset}
\textbf{MIMIC-III}~\cite{johnson2016mimic} contains irregular clinical time series and clinical notes. We construct multimodal patient samples based on the preprocessing pipelines in~\cite{harutyunyan2019multitask,zhang2023improving}. Unlike~\cite{zhang2023improving}, which removes samples with incomplete modalities, we modify the pipeline to retain naturally missing cases. \textbf{MIMIC-IV}~\cite{johnson2023mimic} includes irregular time series, clinical notes, ECG signals, and chest X-rays. We adopt the MedFuse~\cite{hayat2022medfuse} preprocessing and dataset split for time-series data, while processing the remaining modalities separately to construct aligned patient samples. \textbf{CMU-MOSI}~\cite{zadeh2016mosi} is a multimodal sentiment dataset consisting of text, audio, and visual modalities. \textbf{CMU-MOSEI}~\cite{bagher2018multimodal} extends MOSI to a larger collection of YouTube videos with the same three modalities. For both datasets, we use the preprocessing pipeline from MMML~\cite{mustafa2022multimodal}. Following~\cite{zheng2025rethinking}, all methods are evaluated using consistent dataset splits and modality encoders. For MIMIC-III and MIMIC-IV, we use ClinicalBERT~\cite{alsentzer2019publicly} for clinical notes, single-layer patch embeddings for irregular time series and ECG, and a pretrained DenseNet from TorchXRayVision~\cite{cohen2022torchxrayvision} for chest X-rays. We consider three missing-modality settings: \textbf{Setting I} preserves natural missingness; \textbf{Setting II} randomly drops modalities during both training and testing while retaining at least one observed modality; and \textbf{Setting III} applies modality dropout during training and evaluates samples with only one or two available modalities. All experiments are repeated with three random seeds under the same hardware environment.

\subsection{Baselines}
We compare our method against representative approaches for multimodal learning and incomplete-modality scenarios. \textbf{TF}\cite{zadeh2017tensor} captures cross-modal interactions through modality-specific subnetworks followed by a tensor fusion layer. \textbf{LlMoE}\cite{mustafa2022multimodal} incorporates contrastive learning to facilitate multimodal representation learning. For methods explicitly designed for missing modalities, \textbf{SMIL}\cite{ma2021smil} adopts a Bayesian meta-learning strategy to reconstruct unavailable modality information, while \textbf{ShaSpec}\cite{wang2023multi} disentangles modality-shared and modality-specific representations through multiple training objectives and uses the shared representation to compensate for missing inputs.

We further include several MoE-based approaches. \textbf{FuseMoE}\cite{han2024fusemoe} jointly models samples with different modality availability patterns and employs a Laplacian-based gating mechanism for multimodal fusion and expert routing. We evaluate both its softmax-gated and Laplacian-gated variants, denoted as FuseMoE-S and FuseMoE-L, respectively. \textbf{FlexMoE}\cite{yun2024flex} associates different modality combinations with specialized experts and reconstructs absent modalities using a modality bank. Finally, \textbf{ConfSMoE}\cite{zheng2025rethinking} combines a two-stage modality imputation procedure with confidence-aware expert routing through ConfNet to improve robustness under incomplete multimodal inputs.

\subsection{Dataset and Task Description} \label{ap:dataset}

\begin{table}[h!]
\centering
\caption{Dataset statistics}
\resizebox{\linewidth}{!}{%
\begin{tabular}{c|cccccc}
    \toprule
    \textbf{Dataset} & \textbf{Task} & \textbf{\#Samples} & \textbf{Modality} & \textbf{\#Classes} & \textbf{Is Imbalanced?} & \textbf{Missing Ratio} \\
    \midrule
    \multirow{3}{*}{MIMIC-III}        & 48-IHM & 21,139   & Time Series, Text & 2 & Yes & 23.60\% \\
                                      & LOS    & 21,139   & Time Series, Text & 2 & Yes & 23.60\% \\
                                      & 25-PHE & 41,902 & Time Series, Text & 25 & Yes & 23.70\% \\
    \midrule 
    \multirow{3}{*}{MIMIC-IV}         & 48-IHM & 22,733 & Image, ECG, Time Series, Text & 2 & Yes & 47.10\% \\
                                      & LOS    & 22,733 & Image, ECG, Time Series, Text & 2 & Yes & 47.10\% \\
                                      & 25-PHE & 47,027 & Image, ECG, Time Series, Text & 25 & Yes & 50.32\% \\
    \midrule 
    CMU-MOSI         & Sentiment Analysis & 1870   & Audio, Video, Text & 3 & Yes & 0\% - 50\% \\
    CMU-MOSEI        & Sentiment Analysis & 20,985 & Audio, Video, Text & 3 & Yes & 0\% - 50\% \\
    \bottomrule
\end{tabular}}
\end{table}

\textbf{48-IHM} is a binary classification task for predicting in-hospital mortality. Following the benchmark setting, only the first 48 hours of ICU time-series observations are used as input.

\textbf{LOS} is formulated as a binary classification task under a similar setting to 48-IHM. We retain patients who have stayed in the ICU for at least 48 hours and use their observed records to predict the remaining length of stay.

\textbf{25-PHE} is a multi-label phenotype classification task that predicts the presence of 25 acute care conditions from each ICU stay. The definitions of these clinical conditions follow the MIMIC benchmark \cite{johnson2016mimic}.

\textbf{CMU-MOSI} and \textbf{CMU-MOSEI} are commonly used multimodal sentiment analysis datasets containing aligned text, audio, and video information. Each video segment is associated with a sentiment annotation. Since these datasets are originally complete across modalities, we simulate missing-modality scenarios by randomly dropping modalities during training.

All datasets considered in this study are publicly available and have been widely used in multimodal learning research. The datasets and corresponding preprocessing benchmarks can be accessed from the following resources:
\begin{itemize}
    \item CMU-MOSI and CMU-MOSEI: \textcolor{blue}{https://github.com/zehuiwu/MMML}
    \item MIMIC-III: \textcolor{blue}{https://physionet.org/content/mimiciii/1.4/}, with benchmark implementation at \textcolor{blue}{https://github.com/YerevaNN/mimic3-benchmarks}
    \item MIMIC-IV: \textcolor{blue}{https://physionet.org/content/mimiciv/2.2/}, with benchmark implementation at \textcolor{blue}{https://github.com/nyuad-cai/MedFuse}
\end{itemize}

\subsection{Hyperparameter and Hardware Setting}

All experiments are conducted in a remote server with NVIDIA A100 40 GB GPU.

\begin{table}[h!]
\centering
\caption{Hyperparameter setting}
\label{tb:hyperparameter}
\resizebox{0.35\linewidth}{!}{%
\begin{tabular}{c|cccccc}
    \toprule
    Parameter & Value \\
    \midrule
     \#Expert          & 8 \\
     \#Top-K Expert    & 1 \\
     Learning Rate     & 3e-4 \\
     Hidden Size       & 128 \\
     \#MoE layers      & 1 \\
     Training Epoch    & 50 \\
     Weight for $\mathcal{L}_{SWD}$ & 0.1 \\
     Random Seed       & 2025 \\
     Dropout           & 0.1 \\
    \bottomrule
\end{tabular}}
\end{table}

\subsection{Parameter Sensitivity}

We further conduct a sensitivity analysis to evaluate the robustness of the proposed method to several key hyperparameters, including the buffer quantile, FIFO queue capacity, and SWD loss weight. Overall, the model remains relatively stable across a broad range of settings, indicating that its performance is not overly sensitive to specific hyperparameter choices.

\begin{figure}
    \centering
    \includegraphics[width=\linewidth]{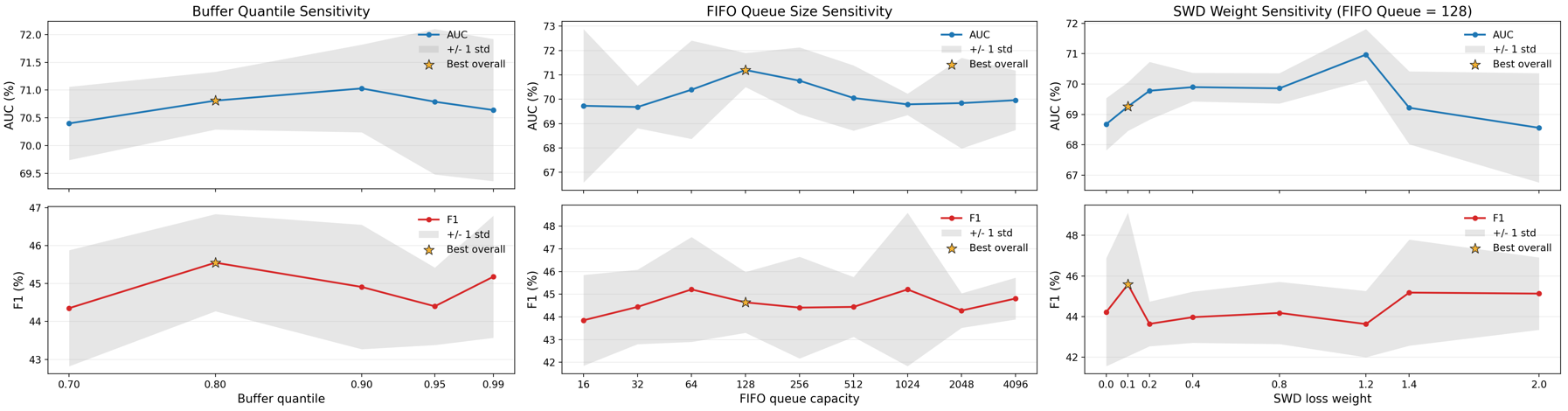}
    \caption{Ablation study on parameter sensitivity}
    \label{fig:addition_ablation_study}
\end{figure}

\textbf{Buffer quantile}. The model is relatively stable across different buffer quantiles, with only moderate variation in both AUC and F1. A quantile of 0.8 provides the best overall balance, achieving the highest F1 while maintaining competitive AUC. Increasing the quantile beyond 0.9 does not bring further improvements.

\textbf{FIFO queue size}.  Performance improves as the queue size increases from very small values, with 128 giving the best overall performance and the highest AUC. Larger queues provide little additional benefit and occasionally reduce performance, suggesting that a moderate queue size is sufficient to maintain representative historical embeddings.

\textbf{SWD loss weight.} The SWD weight shows a clearer trade-off between AUC and F1. A small weight of 0.1 gives the best overall balance, particularly improving F1, while larger weights can improve AUC but do not consistently benefit F1. This suggests that a relatively weak SWD regularization is sufficient, whereas excessive weighting may over-constrain the learned representations.

\section{Additional Pre-Imputation Activation}

To study the impact of pre-imputation, we implement different activation used in pre-imputation with bounded and unbounded activation functions as shown in Table \ref{tab:preimpute}. Whether our proposed pre-imputation module makes different to basic activation function that constrains the pre-imputed representation to an empirically calibrated standardized-radius region. Let's replace the soft-clipping term in \ref{eq:pre_imputation} to $\Delta_m$, indicating a sampling parameter from emperical distribution of missing modality. We provide additional ablation on different pre-imputation method and defined all pre-imputation baseline methods as below:

\textbf{Arcsinh} does not constrain the magnitude of the predicted residual according to the empirical embedding scale of the missing modality. It may therefore generate outlying pre-imputed representations.

\begin{equation}
    \Delta_m = Arcsinh (\textbf{W}_m \textbf{h}_v)
\end{equation}

\textbf{Tanh} bounds each embedding dimension independently to $[-1, 1]$, but this coordinate-wise constraint does not preserve the joint geometry, cross-dimensional correlations, or cluster structure of the empirical modality distribution. Its saturation also weakens gradients for large residuals.

\begin{equation}
    \Delta_m = tanh(\textbf{W}_m \textbf{h}_v)
\end{equation}

\textbf{Softsign} is likewise a coordinate-wise bounded activation and lacks any modality-specific empirical calibration. Although its gradients decay more gradually than those of Tanh, they still approach zero for large inputs and cannot constrain the global embedding radius.

\begin{equation}
    \Delta_m = softsign(\textbf{W}_m \textbf{h}_v), \quad softsign(x) = \frac{x}{1+|x|}
\end{equation}

\textbf{Gated} formulation adaptively suppresses each residual dimension but remains a coordinate-wise constraint without modeling the joint empirical geometry of the target modality. Its sigmoid gate can over-suppress the residual, leading to low-information pre-imputed embeddings or residual collapse. Let's denote two separate weight $\textbf{W}_g, \textbf{W}_t \in \mathbb{R}^{d \times d}$.

\begin{equation}
    \Delta_m = Sigmoid (\textbf{W}_g \textbf{h}_v) \odot tanh(\textbf{W}_t \textbf{h}_v)
\end{equation}

\section{Limitation}
The proposed method is designed for robust representation recovery rather than exact reconstruction of the missing modality as in generative approaches, and the imputed representation should therefore be interpreted as a task-oriented estimate instead of a faithful reconstruction of the original modality. Moreover, our theoretical analysis is primarily developed for Sparse MoE, while its extension to SoftMoE is validated empirically. The effectiveness of intra-expert imputation may also depend on the quality of the learned routing and cross-modal alignment.

\section{Additional Results}
These experiments extend the main results presented in Section Experiment and highlight the scalability of our method to real-world multimodal challenges.

\begin{table*}[h]
  \caption{Experiment setting I: Main results on MIMIC-IV}
  \label{tb:mimic-4-experiment}
  \centering
  \resizebox{\textwidth}{!}{%
  \begin{tabular}{cc|cccccccc>{\columncolor{gray!20}}c>{\columncolor{gray!20}}c}
    \toprule
    Task   & Metric & SMIL & ShaSpec & TF & LlMoE & FuseMoE-S & FuseMoE-L & FlexMoE & ConfSMoE & \textbf{TWIN-SF} & \textbf{TWIN-SP}  \\
    \midrule
    
    \multirow{2}{*}{48-IHM} & F1  & 39.58 $\pm$ 1.12 & 32.97 $\pm$ 1.08 & 11.60 $\pm$ 0.54 & 43.76 $\pm$ 0.44 & 30.14 $\pm$ 1.15 & 40.21 $\pm$ 1.29 & 35.29 $\pm$ 0.30 & 43.36 $\pm$ 0.94 & \textbf{47.05 $\pm$ 0.48} & 45.42 $\pm$ 0.39 \\
                            & AUC & 76.60 $\pm$ 0.98 & 78.29 $\pm$ 0.25 & 67.08 $\pm$ 0.77 & 82.97 $\pm$ 0.61 & 71.34 $\pm$ 0.88 & 78.05 $\pm$ 0.72 & 80.45 $\pm$ 0.44 & 84.50 $\pm$ 0.33 & \textbf{85.55 $\pm$ 0.41} & 84.38 $\pm$ 0.34 \\
    \midrule
    \multirow{2}{*}{LOS}    & F1  & 58.52 $\pm$ 0.44 & 56.22 $\pm$ 0.28 & 42.01 $\pm$ 1.12 & 59.03 $\pm$ 0.39 & 57.59 $\pm$ 0.76 & 58.31 $\pm$ 0.46 & 56.96 $\pm$ 0.53 & 60.82 $\pm$ 0.56 & \textbf{61.61 $\pm$ 0.21} & 61.14 $\pm$ 0.33 \\
                            & AUC & 75.86 $\pm$ 0.66 & 72.22 $\pm$ 0.18 & 64.11 $\pm$ 0.40 & 76.17 $\pm$ 0.17 & 72.59 $\pm$ 1.07 & 72.59 $\pm$ 0.61 & 74.81 $\pm$ 0.37 & 76.80 $\pm$ 0.36 & 77.30 $\pm$ 0.78 & \textbf{77.98 $\pm$ 0.48} \\
    \midrule
    \multirow{2}{*}{25-PHE} & F1  & 27.77 $\pm$ 0.91 & 25.53 $\pm$ 0.11 & 26.52 $\pm$ 0.29 & 25.38 $\pm$ 0.52 & 12.45 $\pm$ 0.54 & 12.25 $\pm$ 0.55 & 24.61 $\pm$ 0.71 & 26.84 $\pm$ 0.56 & \textbf{28.52 $\pm$ 0.67} & 28.51 $\pm$ 0.47 \\
                            & AUC & 63.90 $\pm$ 1.31 & 62.57 $\pm$ 0.23 & 56.28 $\pm$ 0.16 & 72.50 $\pm$ 0.67 & 55.48 $\pm$ 0.39 & 58.52 $\pm$ 0.27 & 71.57 $\pm$ 0.47 & 74.73 $\pm$ 0.71 & \textbf{74.82 $\pm$ 0.77} & 74.49 $\pm$ 0.98 \\
    \bottomrule
  \end{tabular}
  }
\end{table*}

\begin{table}[h]
  \caption{Experiment setting II: Main results on CMU-MOSI}
  \label{tb:CMU-MOSI_Experiment II}
  \centering
  \resizebox{\textwidth}{!}{%
  \begin{tabular}{cc|ccccccc>{\columncolor{gray!20}}c>{\columncolor{gray!20}}c}
    \toprule
    Missing Rate   & Metric & ShaSpec & TF & LlMoE & FuseMoE-S & FuseMoE-L & FlexMoE & ConfSMoE & \textbf{TWIN-SF} & \textbf{TWIN-SP}  \\
    \midrule
    \multirow{2}{*}{0\%}  & F1  & 41.78 $\pm$ 0.64 & 43.28 $\pm$ 0.36 & 50.91 $\pm$ 0.19 & 47.69 $\pm$ 0.39 & 48.15 $\pm$ 0.57 & 51.50 $\pm$ 0.55 & 51.83 $\pm$ 0.27 & \textbf{53.35 $\pm$ 0.10} & 53.61 $\pm$ 0.39  \\
                          & AUC & 65.08 $\pm$ 0.39 & 64.48 $\pm$ 0.51 & 78.12 $\pm$ 0.31 & 75.26 $\pm$ 0.20 & 75.96 $\pm$ 0.29 & 79.38 $\pm$ 0.23 & 80.56 $\pm$ 0.10 & \textbf{81.23 $\pm$ 0.36} & 80.64 $\pm$ 0.10  \\
    \midrule
    \multirow{2}{*}{10\%} & F1  & 43.40 $\pm$ 0.74 & 40.14 $\pm$ 0.13 & 49.91 $\pm$ 0.33 & 46.81 $\pm$ 0.45 & 48.81 $\pm$ 0.42 & 49.92 $\pm$ 0.28 & 49.99 $\pm$ 0.85 & 51.66 $\pm$ 0.85 & \textbf{51.73 $\pm$ 1.40}  \\
                          & AUC & 65.45 $\pm$ 0.56 & 61.54 $\pm$ 0.66 & 77.12 $\pm$ 0.14 & 70.25 $\pm$ 0.93 & 72.36 $\pm$ 0.50 & 77.07 $\pm$ 0.21 & 78.06 $\pm$ 0.64 & \textbf{79.60 $\pm$ 0.40} & 78.40 $\pm$ 0.77  \\
    \midrule
    
    \multirow{2}{*}{20\%} & F1  & 39.15 $\pm$ 0.13 & 42.64 $\pm$ 0.30 & 48.84 $\pm$ 0.61 & 47.92 $\pm$ 0.58 & 45.00 $\pm$ 0.44 & 49.16 $\pm$ 0.24 & 50.58 $\pm$ 0.73 & \textbf{51.17 $\pm$ 0.42} & 50.66 $\pm$ 0.84  \\
                          & AUC & 60.87 $\pm$ 0.29 & 65.52 $\pm$ 0.98 & 76.64 $\pm$ 0.23 & 71.05 $\pm$ 0.36 & 72.15 $\pm$ 0.37 & 73.80 $\pm$ 1.43 & 77.17 $\pm$ 0.55 & \textbf{78.26 $\pm$ 0.49} & 76.33 $\pm$ 0.82  \\
    \midrule

    \multirow{2}{*}{30\%} & F1  & 39.54 $\pm$ 0.17 & 40.42 $\pm$ 0.75 & 46.13 $\pm$ 0.55 & 43.78 $\pm$ 0.51 & 44.48 $\pm$ 0.63 & 45.49 $\pm$ 0.44 & 47.57 $\pm$ 0.10 & \textbf{49.89 $\pm$ 0.79} & 49.61 $\pm$ 1.21  \\
                          & AUC & 60.80 $\pm$ 0.44 & 60.57 $\pm$ 0.99 & 74.33 $\pm$ 0.49 & 70.38 $\pm$ 0.56 & 72.25 $\pm$ 0.48 & 72.54 $\pm$ 0.92 & 75.59 $\pm$ 0.18 & \textbf{76.90 $\pm$ 0.72} & 74.45 $\pm$ 0.95  \\
    \midrule
    
    \multirow{2}{*}{40\%} & F1  & 35.32 $\pm$ 0.32 & 38.69 $\pm$ 0.42 & 43.31 $\pm$ 0.38 & 41.88 $\pm$ 0.42 & 43.20 $\pm$ 0.20 & 43.13 $\pm$ 0.61 & 45.12 $\pm$ 0.27 & 46.69 $\pm$ 2.12 & \textbf{47.52 $\pm$ 1.88}  \\
                          & AUC & 59.58 $\pm$ 0.13 & 61.62 $\pm$ 0.07 & 67.21 $\pm$ 0.43 & 64.61 $\pm$ 0.63 & 66.14 $\pm$ 0.11 & 68.12 $\pm$ 0.34 & 71.65 $\pm$ 0.47 & \textbf{72.53 $\pm$ 2.65} & 72.22 $\pm$ 1.21  \\
    \midrule
    
    \multirow{2}{*}{50\%} & F1  & 36.39 $\pm$ 0.97 & 38.46 $\pm$ 1.09 & 41.54 $\pm$ 0.77 & 38.55 $\pm$ 0.34 & 40.90 $\pm$ 0.49 & 41.95 $\pm$ 0.48 & 44.34 $\pm$ 0.77 & 44.29 $\pm$ 1.92 & \textbf{46.62 $\pm$ 4.43}  \\
                          & AUC & 55.60 $\pm$ 0.12 & 60.49 $\pm$ 0.80 & 67.43 $\pm$ 0.34 & 62.28 $\pm$ 0.28 & 65.73 $\pm$ 0.30 & 66.47 $\pm$ 0.49 & 70.41 $\pm$ 1.31 & \textbf{71.03 $\pm$ 2.43} & 70.50 $\pm$ 1.30  \\
    \bottomrule
  \end{tabular}
  }
\end{table}

\begin{table}[h]
  \caption{Experiment setting III: Main results on CMU-MOSI}
  \label{tb:CMU-MOSI_Experiment III AUC}
  \centering
  \resizebox{\textwidth}{!}{%
  \begin{tabular}{ccc|ccccccc>{\columncolor{gray!20}}c>{\columncolor{gray!20}}c}
    \toprule
     \multicolumn{3}{c|}{Test Modality} & \multicolumn{9}{c}{Dataset: CMU-MOSI / Evaluation Metric: AUC }  \\
    \midrule
    Video     & Text        & Audio       & ShaSpec  & TF & LlMoE & FuseMoE-S & FuseMoE-L & FlexMoE & ConfSMoE & \textbf{TWIN-SF} & \textbf{TWIN-SP}  \\
    \midrule
    \checkmark & \checkmark &             & 62.18 $\pm$ 0.88 & 59.37 $\pm$ 1.12 & 74.63 $\pm$ 0.13 & 55.35 $\pm$ 1.11 & 56.44 $\pm$ 1.48 & 74.59 $\pm$ 1.50 & 75.78 $\pm$ 1.03 & \textbf{78.75 $\pm$ 0.25} & 77.10 $\pm$ 0.57 \\
    \checkmark &            & \checkmark  & 55.79 $\pm$ 2.53 & 57.87 $\pm$ 0.32 & 56.84 $\pm$ 0.34 & 56.24 $\pm$ 1.96 & 55.11 $\pm$ 0.77 & 52.28 $\pm$ 1.23 & 57.61 $\pm$ 0.62 & \textbf{69.90 $\pm$ 1.03} & 59.06 $\pm$ 0.89 \\
               & \checkmark & \checkmark  & 65.22 $\pm$ 1.01 & 57.89 $\pm$ 0.65 & 75.59 $\pm$ 0.39 & 54.36 $\pm$ 1.67 & 56.10 $\pm$ 1.76 & 75.55 $\pm$ 1.64 & 76.53 $\pm$ 0.74 & \textbf{77.87 $\pm$ 0.46} & 77.69 $\pm$ 0.22 \\
    \midrule
    \checkmark &            &             & 55.10 $\pm$ 1.59 & 58.57 $\pm$ 0.44 & 74.73 $\pm$ 0.51 & 55.94 $\pm$ 1.51 & 55.66 $\pm$ 1.66 & 74.35 $\pm$ 1.60 & 75.06 $\pm$ 0.22 & \textbf{77.76 $\pm$ 0.46} & 77.06 $\pm$ 0.65 \\
               & \checkmark &             & 61.60 $\pm$ 4.80 & 60.28 $\pm$ 1.42 & 75.73 $\pm$ 0.33 & 55.45 $\pm$ 0.95 & 56.58 $\pm$ 1.29 & 74.42 $\pm$ 1.39 & 74.88 $\pm$ 0.46 & \textbf{75.59 $\pm$ 0.81} & 77.05 $\pm$ 0.72 \\
               &            & \checkmark  & 66.85 $\pm$ 3.01 & 57.19 $\pm$ 0.72 & 73.29 $\pm$ 0.53 & 56.04 $\pm$ 1.91 & 57.85 $\pm$ 0.74 & 75.34 $\pm$ 1.46 & 76.47 $\pm$ 0.63 & \textbf{77.98 $\pm$ 1.15} & 78.03 $\pm$ 0.11 \\
    \bottomrule
  \end{tabular}
  }
\end{table}

\begin{table}
  \caption{Experiment setting III: Main results on CMU-MOSI}
  \label{tb:CMU-MOSI_Experiment III F1}
  \centering
  \resizebox{\textwidth}{!}{%
  \begin{tabular}{ccc|ccccccc>{\columncolor{gray!20}}c>{\columncolor{gray!20}}c}
    \toprule
     \multicolumn{3}{c|}{Test Modality} & \multicolumn{9}{c}{Dataset: CMU-MOSI / Evaluation Metric: F1}  \\
    \midrule
    Video     & Text        & Audio       & ShaSpec & TF & LlMoE & FuseMoE-S & FuseMoE-L & FlexMoE & \textbf{ConfSMoE} & \textbf{TWIN-SF} & \textbf{TWIN-SP}   \\
    \midrule
    \checkmark & \checkmark &             & 41.27 $\pm$ 0.81 & 34.96 $\pm$ 0.72 & 44.45 $\pm$ 0.33 & 30.51 $\pm$ 2.53 & 29.90 $\pm$ 2.58 & 50.54 $\pm$ 0.73 & 50.26 $\pm$ 0.28 & \textbf{51.35 $\pm$ 0.51} & 53.17 $\pm$ 0.64 \\
    \checkmark &            & \checkmark  & 36.99 $\pm$ 2.03 & 34.83 $\pm$ 0.54 & 35.20 $\pm$ 0.64 & 24.29 $\pm$ 4.23 & 27.89 $\pm$ 6.23 & 30.02 $\pm$ 7.82 & 37.37 $\pm$ 1.00 & \textbf{45.10 $\pm$ 2.82} & 37.99 $\pm$ 0.33 \\
               & \checkmark & \checkmark  & 43.72 $\pm$ 1.64 & 37.25 $\pm$ 0.53 & 47.79 $\pm$ 0.46 & 20.68 $\pm$ 2.14 & 29.88 $\pm$ 4.55 & 50.41 $\pm$ 1.55 & 50.77 $\pm$ 0.48 & \textbf{51.11 $\pm$ 0.28} & 53.86 $\pm$ 3.42 \\
    \midrule
    \checkmark &            &             & 42.98 $\pm$ 0.21 & 38.11 $\pm$ 0.35 & 44.86 $\pm$ 0.94 & 25.67 $\pm$ 2.82 & 27.34 $\pm$ 2.94 & 50.39 $\pm$ 0.87 & 50.41 $\pm$ 0.24 & \textbf{51.55 $\pm$ 0.32} & 53.10 $\pm$ 0.65 \\
               & \checkmark &             & 41.36 $\pm$ 2.27 & 37.99 $\pm$ 0.40 & 43.81 $\pm$ 0.42 & 25.70 $\pm$ 2.82 & 27.30 $\pm$ 2.95 & 50.65 $\pm$ 0.90 & 50.35 $\pm$ 0.18 & \textbf{50.72 $\pm$ 0.34} & 52.78 $\pm$ 1.92  \\
               &            & \checkmark  & 45.45 $\pm$ 2.73 & 38.79 $\pm$ 0.56 & 45.50 $\pm$ 0.47 & 25.23 $\pm$ 4.59 & 29.73 $\pm$ 3.51 & 50.57 $\pm$ 1.35 & 50.69 $\pm$ 0.47 & \textbf{52.40 $\pm$ 1.76} & 53.66 $\pm$ 3.14 \\
    \bottomrule
  \end{tabular}
  }
\end{table}

\begin{table}  
  \caption{Experiment setting III: Main results on CMU-MOSEI}
  \label{tb:CMU-MOSEI_Experiment III AUC}
  \centering
  \resizebox{\textwidth}{!}{%
  \begin{tabular}{ccc|ccccccc>{\columncolor{gray!20}}c>{\columncolor{gray!20}}c}
    \toprule
     \multicolumn{3}{c|}{Test Modality} & \multicolumn{9}{c}{Dataset: CMU-MOSEI / Evaluation Metric: AUC }  \\
    \midrule
    Video     & Text        & Audio       & ShaSpec & TF & LlMoE & FuseMoE-S & FuseMoE-L & FlexMoE & ConfSMoE & TWIN-SF & \textbf{TWIN-SP}  \\
    \midrule
    \checkmark & \checkmark &             & 64.47 $\pm$ 0.10 & 56.98 $\pm$ 0.20 & 81.23 $\pm$ 0.07 & 75.81 $\pm$ 0.82 & 76.62 $\pm$ 0.37 & 80.63 $\pm$ 0.41 & 81.52 $\pm$ 0.21 & \textbf{81.67 $\pm$ 0.61} & 81.50 $\pm$ 0.17 \\
    \checkmark &            & \checkmark  & 58.41 $\pm$ 0.10 & 58.41 $\pm$ 0.17 & 59.64 $\pm$ 0.56 & 57.32 $\pm$ 0.62 & 57.22 $\pm$ 0.12 & 58.28 $\pm$ 1.51 & 60.07 $\pm$ 0.22 & 60.53 $\pm$ 1.15 & \textbf{60.64 $\pm$ 0.17} \\
               & \checkmark & \checkmark  & 64.46 $\pm$ 0.14 & 58.64 $\pm$ 0.13 & 80.00 $\pm$ 0.29 & 77.24 $\pm$ 0.60 & 77.31 $\pm$ 0.16 & 80.00 $\pm$ 0.10 & \textbf{81.26 $\pm$ 0.10} & 81.24 $\pm$ 0.13 & 81.23 $\pm$ 0.16 \\
    \midrule
    \checkmark &            &             & 63.52 $\pm$ 0.19 & 58.71 $\pm$ 0.35 & 80.69 $\pm$ 0.34 & 76.84 $\pm$ 0.14 & 76.27 $\pm$ 0.70 & 81.06 $\pm$ 0.20 & 81.53 $\pm$ 0.02 & \textbf{81.62 $\pm$ 0.37} & 81.51 $\pm$ 0.16 \\
               & \checkmark &             & 64.40 $\pm$ 0.24 & 58.89 $\pm$ 0.40 & 80.96 $\pm$ 0.26 & 76.85 $\pm$ 0.14 & 76.51 $\pm$ 0.50 & 80.48 $\pm$ 0.21 & 81.50 $\pm$ 0.16 & \textbf{81.67 $\pm$ 0.45} & 81.61 $\pm$ 0.17 \\
               &            & \checkmark  & 65.16 $\pm$ 0.04 & 58.32 $\pm$ 0.04 & 80.19 $\pm$ 0.11 & 77.13 $\pm$ 0.75 & 77.23 $\pm$ 0.24 & 80.21 $\pm$ 0.11 & 80.87 $\pm$ 0.39 & \textbf{81.29 $\pm$ 0.14} & 81.23 $\pm$ 0.17 \\
    \bottomrule
  \end{tabular}
  }
\end{table}

\begin{table} 
  \caption{Experiment setting III: Main results on CMU-MOSEI}
  \label{tb:CMU-MOSEI_Experiment III F1}
  \centering
  \resizebox{\textwidth}{!}{%
  \begin{tabular}{ccc|ccccccc>{\columncolor{gray!20}}c>{\columncolor{gray!20}}c}
    \toprule
     \multicolumn{3}{c|}{Test Modality} & \multicolumn{9}{c}{Dataset: CMU-MOSEI / Evaluation Metric: F1}  \\
    \midrule
    Video     & Text        & Audio       & ShaSpec & TF & LlMoE & FuseMoE-S & FuseMoE-L & FlexMoE & ConfSMoE & TWIN-SF & \textbf{TWIN-SP}  \\
    \midrule
    \checkmark & \checkmark &             & 40.75 $\pm$ 0.14 & 31.38 $\pm$ 0.61 & 58.97 $\pm$ 0.53 & 55.85 $\pm$ 0.69 & 53.59 $\pm$ 1.07 & 60.10 $\pm$ 0.76 & 61.01 $\pm$ 0.58 & \textbf{61.97 $\pm$ 0.54} & 61.06 $\pm$ 1.00 \\
    \checkmark &            & \checkmark  & 38.15 $\pm$ 0.92 & 31.73 $\pm$ 0.16 & 38.26 $\pm$ 2.40 & 29.40 $\pm$ 1.67 & 33.29 $\pm$ 2.89 & 36.28 $\pm$ 3.51 & 39.20 $\pm$ 0.93 & \textbf{40.64 $\pm$ 0.14} & 40.18 $\pm$ 0.11 \\
               & \checkmark & \checkmark  & 40.82 $\pm$ 0.12 & 31.86 $\pm$ 0.23 & 58.58 $\pm$ 0.31 & 57.02 $\pm$ 1.60 & 53.38 $\pm$ 0.65 & 58.70 $\pm$ 2.10 & 61.37 $\pm$ 0.28 & \textbf{61.98 $\pm$ 0.66} & 61.54 $\pm$ 0.57 \\
    \midrule
    \checkmark &            &             & 40.37 $\pm$ 0.46 & 31.59 $\pm$ 0.23 & 61.20 $\pm$ 0.46 & 54.02 $\pm$ 2.88 & 53.08 $\pm$ 0.40 & 58.58 $\pm$ 0.71 & \textbf{61.71 $\pm$ 1.00} & 61.05 $\pm$ 1.40 & 61.05 $\pm$ 1.00 \\
               & \checkmark &             & 42.16 $\pm$ 0.22 & 30.84 $\pm$ 0.26 & 59.35 $\pm$ 0.12 & 55.87 $\pm$ 0.68 & 54.11 $\pm$ 0.49 & 58.42 $\pm$ 0.27 & 61.34 $\pm$ 0.83 & \textbf{61.97 $\pm$ 0.54} & 61.00 $\pm$ 0.97 \\
               &            & \checkmark  & 42.45 $\pm$ 0.85 & 30.71 $\pm$ 0.71 & 60.00 $\pm$ 0.85 & 56.64 $\pm$ 1.65 & 52.92 $\pm$ 1.11 & 59.50 $\pm$ 0.80 & 60.91 $\pm$ 0.59 & \textbf{61.36 $\pm$ 0.47} & 61.40 $\pm$ 0.63 \\
    \bottomrule
  \end{tabular}
  }
\end{table}


\begin{table}[t]
\centering
\caption{Comparison of pre-imputation coefficient operator $\mathcal{B}_m$ on CMU-MOSI
(Experiment Setting II, missing ratio $0.5$). The coefficient
$\Delta_m$ is plugged into $\hat{h}_m^{pre}=\mu_m+\sigma_m \odot \Delta_m$.}
\label{tab:preimpute}
\begin{tabular}{llcc}
\toprule

$\mathcal{B}_m$ & AUC & F1 \\
\midrule
arcsinh     & $69.61 \pm 2.12$ & $45.95 \pm 0.97$ \\
tanh        & $69.52 \pm 2.02$ & $45.76 \pm 1.18$ \\
softsign    & $69.63 \pm 1.72$ & $46.19 \pm 1.61$  \\
gated       & $70.28 \pm 1.47$ & $44.24 \pm 1.10$  \\
Ours        & $\mathbf{70.50 \pm 1.30}$ & $\mathbf{46.62 \pm 4.43}$ \\
\bottomrule
\end{tabular}
\end{table}


\begin{figure}
    \centering
    \includegraphics[width=\linewidth]{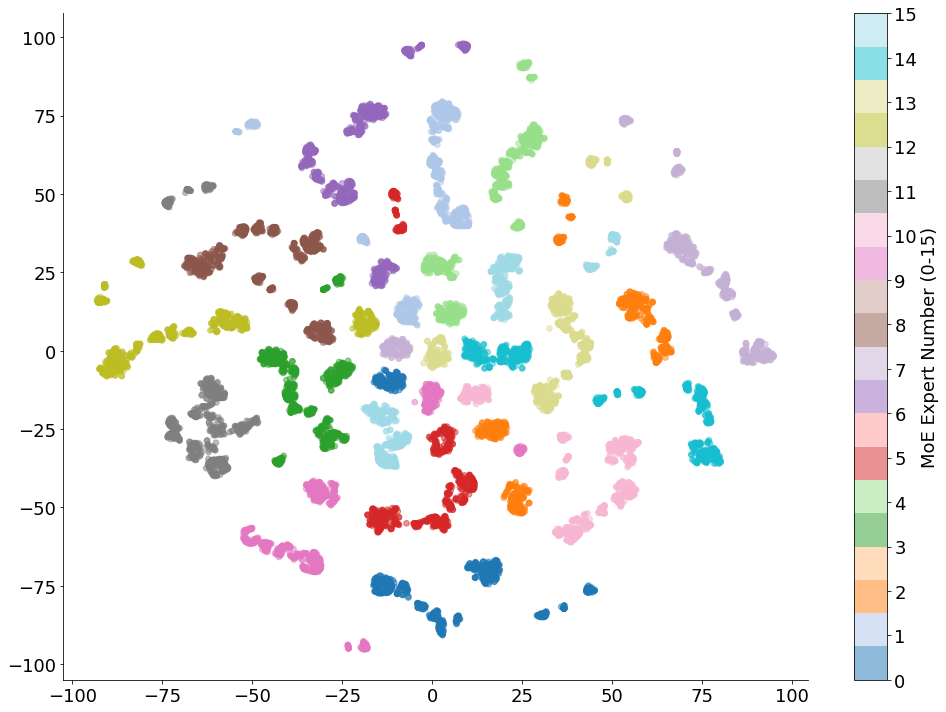}
    \caption{Embedding space of MoE has been split into different clusters}
    \label{fig:t-sne_plot_sub_spaces}
\end{figure}

\end{document}